\theoremstyle{plain}
\newtheorem{theorem}{Theorem}[]
\newtheorem{proposition}[theorem]{Proposition}
\theoremstyle{definition}
\newtheorem{definition}[theorem]{Definition}
\theoremstyle{remark}
\newtheorem{remark}[theorem]{Remark}
\newcommand{\rec}{\text{rec}}
\newcommand{\reals}{\mathbb{R}}
\newcommand{\R}{\mathbb{R}}
\newcommand{\N}{\mathbb{N}}
\newcommand{\zero}{\boldsymbol{0}}
\newcommand{\abs}[1]{\left| #1 \right|}
\newcommand{\vertiii}[1]{{\left\vert\kern-0.25ex\left\vert\kern-0.25ex\left\vert #1 
    \right\vert\kern-0.25ex\right\vert\kern-0.25ex\right\vert}}
\newcommand{\X}{\mathcal{X}}
\newcommand{\beq}{\begin{eqnarray*}}
\newcommand{\eeq}{\end{eqnarray*}}
\newcommand{\beqn}{\begin{eqnarray}}
\newcommand{\eeqn}{\end{eqnarray}}
\newcommand{\ent}[1][]{%
\ifthenelse{\isempty{#1}}{%
\mathrm{H}
}{
\mathrm{H}^{(#1)}
}}
\newcommand{\loch}[1][]{%
\ifthenelse{\isempty{#1}}{%
\mathrm{h}
}{
\mathrm{h}^{(#1)}
}}
\newcommand{\Dcal}{\mathcal{D}}
\newcommand{\Hcal}{\mathcal{H}}
\newcommand{\Lcal}{\mathcal{L}}
\newcommand{\Sphere}{\mathbb{S}}
\newcommand{\bx}{\mathbf{x}}
\newcommand{\x}{\mathbf{x}}
\newcommand{\bw}{\mathbf{w}}
\newcommand{\bu}{\mathbf{u}}
\newcommand{\bv}{\mathbf{v}}
\newcommand{\bz}{\mathbf{z}}
\newcommand{\by}{\mathbf{y}}
\newcommand{\balpha}{\bm{\alpha}}
\newcommand{\Xcal}{\mathcal{X}}
\newcommand{\st}{\text{ s.t }}
\newcommand{\kr}{\boldsymbol{k}}
\newcommand{\secref}[1]{Sec.~\ref{#1}}
\newcommand{\figref}[1]{Fig.~\ref{#1}}
\renewcommand{\eqref}[1]{Eq.~(\ref{#1})}
\newcommand{\defref}[1]{Def.~(\ref{#1})}
\newcommand{\thmref}[1]{Thm.~\ref{#1}}
\newcommand{\propref}[1]{Proposition~\ref{#1}}
\newcommand{\appref}[1]{Appendix~\ref{#1}}
\newcommand{\norm}[1]{\left\lVert#1\right\rVert}
\title{Querying Kernel Methods Suffices for Reconstructing their Training Data}
\author{%
  Daniel Barzilai\textsuperscript{*, $\dagger$} \\
  % Weizmann Institute of Science\\
  % \texttt{daniel.barzilai@weizmann.ac.il}
  \And
  Yuval Margalit\textsuperscript{*, $\dagger$} \\
  % Weizmann Institute of Science\\
  % \texttt{yuval.margalit@weizmann.ac.il}
  \And
  Eitan Gronich\textsuperscript{$\dagger$} \\
  % Weizmann Institute of Science\\
  % \texttt{eitan.gronich@weizmann.ac.il} \\
  \AND
  Gilad Yehudai\textsuperscript{$\ddagger $} \\
  % Center for Data Science, New York University\\
  % \texttt{gilad.yehudai@weizmann.ac.il} \\
  \And
  Meirav Galun\textsuperscript{$\dagger$} \\
  % Weizmann Institute of Science \\
  % \texttt{meirav.galun@weizmann.ac.il} \\
  \And
  Ronen Basri\textsuperscript{$\dagger$} \\
  % Weizmann Institute of Science\\
  % \texttt{ronen.basri@weizmann.ac.il} \\
}
\begin{document}

\maketitle
\renewcommand{\thefootnote}{\fnsymbol{footnote}}
\footnotetext[1]{Equal Contribution, \quad 
\textsuperscript{$\dagger$}Weizmann Institute of Science, \quad 
\textsuperscript{$\ddagger$}Courant Institute of Mathematical Sciences, New York University, \quad 
Correspondence to: daniel.barzilai@weizmann.ac.il, \quad 
Code: \url{https://github.com/dbarzilai7/kernel_reconstruction}
}
\renewcommand{\thefootnote}{\arabic{footnote}}\addtocounter{footnote}{-1}

\begin{abstract}
Over-parameterized models have raised concerns about their potential to memorize training data, even when achieving strong generalization. The privacy implications of such memorization are generally unclear, particularly in scenarios where only model outputs are accessible. We study this question in the context of kernel methods, and demonstrate both empirically and theoretically that querying kernel models at various points suffices to reconstruct their training data, even without access to model parameters. Our results hold for a range of kernel methods, including kernel regression, support vector machines, and kernel density estimation. Our hope is that this work can illuminate potential privacy concerns for such models.
\end{abstract}

\section{Introduction}
Machine learning methods often rely on highly expressive models for performing well on various tasks \citep{zhang2021understanding, kaplan2020scaling}. However, this success comes at a cost: these models often memorize large parts of their training data, raising significant concerns about unintended privacy leaks \citep{carliniquantifying, haim2022reconstructing}. 
As a result, understanding memorization has become a central subject of research in the past few years due to its importance, both theoretically and practically. Recent theoretical works have suggested that memorizing a constant fraction of the training data may be inevitable in certain settings \citep{brown2021memorization, attiasinformation}. However, it is still unclear when memorization translates into privacy vulnerabilities, especially in scenarios where attackers have only limited (e.g., query-only) access to the model.

Kernel methods are a popular set of tools that offer an ideal proving ground for this question. In particular, kernels are both highly expressive (capable of severe memorization) and analytically tractable. This allows us to study fundamental questions about memorization in settings that reflect key characteristics of modern learning systems (e.g., query-only settings). In particular, we will be interested in the following question: \emph{Given query-only access to a kernel model, can we reconstruct its training data?} 

In this work, we both prove theoretically and demonstrate empirically that the answer to the above question is \emph{yes}: The ability to query models at multiple points is sufficient to mount an effective data reconstruction attack. We show that in this setting, data reconstruction attacks pose a major security concern, even without access to model parameters. We summarize our key contributions as follows.

\begin{enumerate}[leftmargin=*]
    \item We present a data reconstruction attack that works in settings where the attacker only has access to model queries but not to model parameters. We demonstrate our attack on a variety of classical learning algorithms, including kernel regression, kernel support vector machines, and kernel density estimation. 
    
    \item 
    On the theoretical side, we prove that for a wide range of kernels, minimizing our reconstruction loss guarantees reconstructing the training set. Furthermore, we formally prove an upper bound on the number of query points needed in order to gather enough information on the attacked model to reconstruct its entire training set.
    \item We demonstrate our reconstruction attack empirically in a range of settings. While there are no works to directly compare against, we show that the quality of our reconstructions is better than or comparable to reconstruction attacks in other settings (even ones that access model parameters). 
\end{enumerate}

\begin{figure*}[t]
    \centering
    \includegraphics[width=\textwidth]{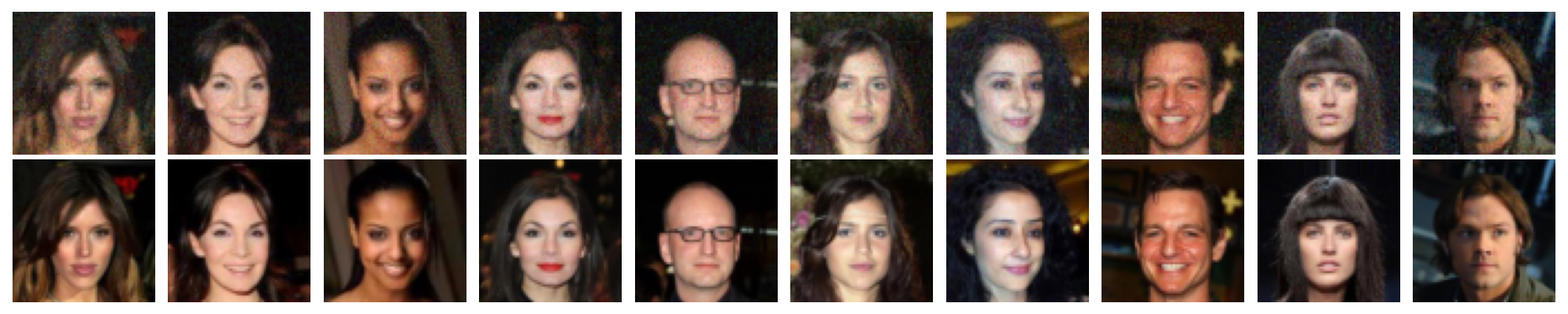}
    \caption{Black-box reconstruction of training images in a kernel regression task with an RBF kernel pre-trained on 500 images from the celebA dataset. The top row shows 10 reconstructions, and the bottom row shows their nearest neighbors in the training set. The full set of reconstructions can be found in the appendix in~\figref{fig:krr_rbf_celebA}.}
    \label{fig:poster_fig}
    \vspace{-0.5em}
\end{figure*}

\figref{fig:poster_fig} shows examples of celebrity images reconstructed from a trained RBF kernel. These results challenge the assumption that preventing access to model parameters mitigates privacy risks. By exposing vulnerabilities across various settings, we hope our work highlights the need for privacy-preserving techniques that remain robust even in adversarial black-box settings.

\section{Related Works}
\paragraph{Data Reconstruction.}
Extracting sensitive information from trained models is the subject of many studies. Earlier works include \emph{model inversion attacks}, also known as activation maximization, that aim at reconstructing class representatives by optimizing over the inputs to maximize the desired class output \citep{fredrikson2015model,he2019model,yang2019neural}. Although there are semantic similarities between the reconstructions and the trained data, these are still not the true data samples used to train the attacked model. Recently, \citet{haim2022reconstructing} demonstrated a reconstruction attack based on theoretical results on the implicit bias of neural networks \citep{lyu2019gradient, ji2020directional}. This work was later extended and further analyzed \citep{buzaglo2024deconstructing, oz2024reconstructing, smorodinsky2024provable}, and was adapted to the lazy regime \citep{loo2023understanding}. We emphasize that all the above methods are not directly comparable to ours, as their methods require working with the parameters of the trained model, which for kernel methods is usually intractable. Our method only requires query access to the evaluations of the model on new points. Other methods demonstrated the reconstruction of training data in large language \citep{carlini2021extracting,carlini2022quantifying} and diffusion models \citep{carlini2023extracting,somepalli2023diffusion}. These methods are specifically designed for generative models.

\paragraph{Kernel Methods.}
Kernel methods are a popular set of tools for solving various tasks, including kernel regression, Support Vector Machines (SVM), and kernel density estimation \citep{james2013introduction, devroye2013probabilistic}. In recent years, kernel methods have seen renewed popularity due to their connection to overparameterized neural networks \citep{lee2017deep, jacot2018neural, arora2019exact, lee2019wide, chizatlazy, du2019gradient, allen2019convergence, yang2021tensor}. While this connection is only approximate and does not encompass all relevant settings, kernel methods have proved to be a valuable tool in understanding intriguing phenomena in neural networks, including, for example, double/multiple descent \citep{belkin2019reconciling, mei2022generalization, xiao2022precise, barzilaigeneralization}, frequency bias \citep{bietti2019inductive, basri2020frequency, barzilaikernel}, and benign overfitting \citep{hastie2022surprises, tsigler2023benign}.
From a practical perspective, kernel methods remain common tools, as they may outperform neural networks in small data or low-dimensional tasks \citep{arora2020harnessing}.

\section{Preliminaries on Kernel Methods}\label{sec:prelim}
Kernel methods learn a linear function in some feature map $\phi:\Xcal\to\Hcal$, where $\Xcal \subseteq \R^d$ and $\Hcal$ is a Hilbert space with norm $\norm{\cdot}$. 
Consider a dataset of $N$ inputs that are $d$-dimensional $\bx_1,\ldots, \bx_N\in\R^d$ and possibly $C$-dimensional labels $\by_1,\ldots, \by_N\in\R^C$. Then, for parameters $\bw_1,\ldots, \bw_C\in\Hcal$, kernel methods learn a function of the form
\begin{align}\label{def:linear}
    f(\bx) = \left[\langle \bw_1, \phi(\bx) \rangle, \ldots, \langle \bw_C, \phi(\bx) \rangle \right]^\top \in \R^C.
\end{align}
Let $\kr:\Xcal\times \Xcal \to \R$ be the kernel defined as $\kr(\bx, \bx') := \langle \phi(\bx), \phi(\bx')\rangle$. A useful observation that is central to our reconstruction attack is that for many learning algorithms, the learned predictor can be expressed using the kernel function as 
\begin{align}\label{eq:f_form}
    \forall c\in[C]~,~f_c(\bx) = \sum_{i=1}^N \alpha_{i,c} \kr(\bx, \bx_i), \qquad \qquad \text{for some }\alpha_{i,c}\in\R.
\end{align}
We detail two scenarios in which $f$ takes a form as in \eqref{eq:f_form}. The first is given by the Representer theorem \citep{scholkopf2001generalized, micchelli2005learning} which states that any $f$ that minimizes a loss function that also includes a regularization term on the norms of the parameters can be written as in \eqref{eq:f_form}. 

The second scenario involves training by gradient-based methods. Consider the case where the weights $\bw_i$ are initialized at zero and trained by running a gradient-based optimization method over a loss function $\ell$. Then it is straightforward to observe that the gradients $\nabla_{\bw_i} \ell\left(f(\bx_1), \by_1, ..., f(\bx_N), \by_N\right)$ must lie in the span of $\phi(\bx_i)$. As a result, for many variants of gradient descent, the parameters $\bw_i$ remain in the span of $\phi(\bx_i)$ throughout training. It is straightforward to verify that when this occurs, the learned predictor $f$ can be represented as in \eqref{eq:f_form}.

Importantly, the function $f$ is characterized by the kernel $\kr$, and therefore does not require explicit computation of the feature map $\phi$ (often referred to as the kernel trick). This allows considering $\phi$ that may be infinite-dimensional. As is common, we will often describe kernels through $\kr$ without describing the feature map $\phi$ explicitly.

\section{Query-Based Reconstruction Attack}

\begin{figure*}[t]
    \centering
    % Subfigure 1
    \begin{subfigure}[t]{0.3\textwidth}
        \centering
        \includegraphics[width=\textwidth]{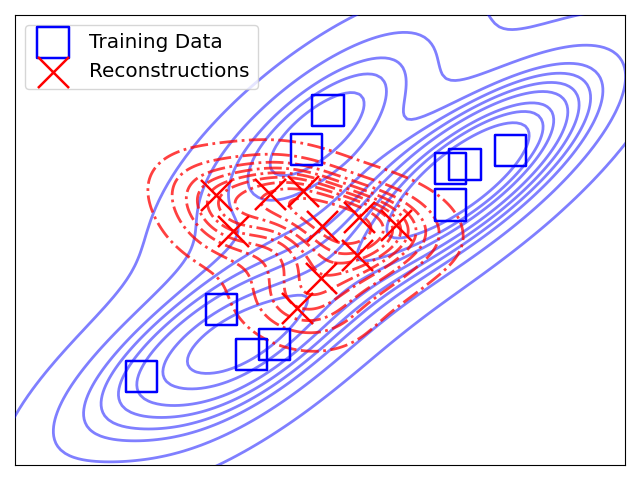}        
        \caption{Initialization}
        \label{fig:kde_init}
    \end{subfigure}
    \hspace{0.01\textwidth}
    % Subfigure 2
    \begin{subfigure}[t]{0.3\textwidth}
        \centering
        \includegraphics[width=\textwidth]{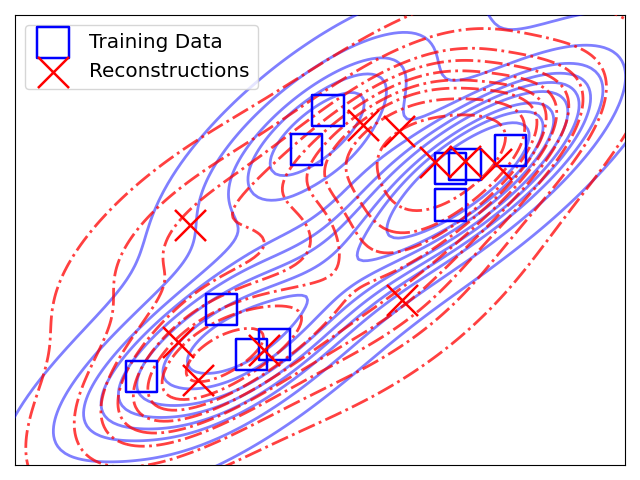}
        \caption{Intermediate step}
        \label{fig:kde_200steps}
    \end{subfigure}
    \hspace{0.01\textwidth}
    % Subfigure 3
    \begin{subfigure}[t]{0.3\textwidth}
        \centering
        \includegraphics[width=\textwidth]{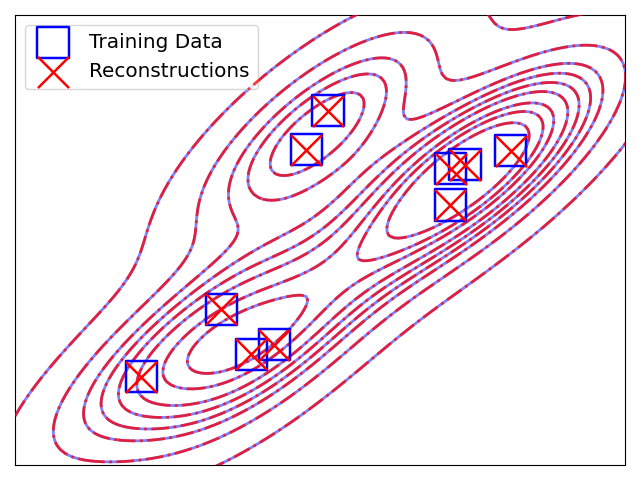}
        \caption{Final reconstructions}
        \label{fig:kde_final}
    \end{subfigure}
   
    \caption{Reconstruction of training points from a two-dimensional kernel density estimator that was trained on the ground truth data points marked by blue squares. We initialize our reconstruction with random samples (left panel). The blue contours represent the attacked density estimator $f$. The red dashed contours represent the model generated by our reconstructions at different steps of optimizing~\eqref{eq:loss_reconstruction}. The reconstructed points (marked by red crosses) match the ground truth points at convergence (right panel).
    }
    \label{fig:2d_kde}
    \vspace{-0.5em}
\end{figure*}

We now detail the reconstruction attack, which is summarized in Algorithm \ref{alg:reconstruct}. Suppose that a model $f$ that was trained on a dataset $\bx_, \ldots, \bx_N$ is expressed as in \eqref{eq:f_form}. We call $f$ the attacked model.
Let $n$ be the number of samples we aim to reconstruct, and define the reconstruction parameters that we optimize
\begin{align}\label{eq:params}
    P:=\{\hat\alpha_{i,c}\}_{i\in[n], c\in[C]} ~\bigcup ~ \{\hat\bx_i\}_{i\in[n]}.
\end{align}

Suppose we are given access to a distribution $\Dcal$ that should preferably be as close as possible to the underlying distribution of the training samples (e.g., natural images). 
We sample $m$ query points $\bz_1,\ldots, \bz_m \sim \Dcal$ and optimize the parameters $P$ using the following reconstruction loss 
\begin{align}\label{eq:loss_reconstruction}
    \Lcal_{\rec}(P) := \frac{1}{mC}\sum_{j=1}^m\sum_{c=1}^C \left(\sum_{i=1}^n \hat\alpha_{i,c} \kr(\bz_j, \hat\bx_i) - f_c(\bz_j)\right)^2.
\end{align}

The reconstruction loss gives $m\cdot C$ non-linear equations that the parameters $P$ should satisfy.

Importantly, our reconstruction attack accesses the attacked model $f$ only through its evaluation of new points and does not require access to its parameters $\bw_i$. 
Access to model parameters does not have to be made public, even if the underlying architecture (in our case, the feature map $\phi$) is publicly known. Furthermore, our attack does not require explicitly working in the feature space. Many feature maps that we consider are very high-dimensional (or even infinite-dimensional), and thus, reconstruction attacks that work in feature space may be computationally expensive and perhaps even completely infeasible.

An important aspect of this reconstruction attack is that the query points $\bz_j$ do not require labels. As such, obtaining large amounts of query points is typically straightforward. If, for example, the training samples $\bx_i$ are natural images, there are many large public datasets of images that can be used as the query points. We will later demonstrate that it is also possible to use synthetic data for the query points.

\begin{algorithm}[t]
\caption{Query-Based Kernel Reconstruction Attack}
\label{alg:reconstruct}
\textbf{Input:} Attacked model $f$, kernel $\kr$, Reconstruction points $n$, query distribution $\mathcal{D}$, number of iterations $T$  \\
\textbf{Output:} reconstructed points $\{\hat{\mathbf{x}}_i\}_{i=1}^{n}$ and coefficients $\{\hat{\alpha}_{i,c}\}$
\begin{algorithmic}[1]
\STATE Initialize $\hat{\mathbf{x}}_i$ and $\hat{\alpha}_{i,c}$ randomly for $i\in[n],\,c\in[C]$
\STATE Sample query points $\{\bz_j\}_{j=1}^m \sim \Dcal$ i.i.d.
\FOR{$t=1$ \textbf{to} $T$}
    \STATE Compute loss $\Lcal_{\rec}(P) = \frac{1}{mC}\sum_{j=1}^m\sum_{c=1}^C \left(\sum_{i=1}^n \hat\alpha_{i,c} \kr(\bz_j, \hat\bx_i) - f_c(\bz_j)\right)^2$
    \STATE Update $\hat\x_i$, $\hat\alpha_i$ via an optimization step
\ENDFOR
\RETURN $\{\hat{\mathbf{x}}_i\},\{\hat{\alpha}_{i,c}\}$
\end{algorithmic}
\end{algorithm}

\subsection{Use Cases}
Throughout this paper, we consider attacked models $f$ that arise from various training algorithms and kernels $\kr$. For the choice of kernel, we consider choices that are common in the literature, including the Laplace kernel, the Gaussian (RBF) kernel, the NTK, and polynomial kernels (see \appref{app:relevant_kernels} for definitions). We now provide several concrete settings for which the reconstruction attack is applicable.

\paragraph{Kernel Regression (KRR). }\label{sec:ker_reg} Here, the parameters $\bw_1,\ldots, \bw_C$ of $f$ are chosen to minimize the regularized mean-squared error loss
\begin{align}\label{eq:krr}
    \frac{1}{2N}\sum_{i=1}^N \sum_{c=1}^C \left(\langle \bw_c, \phi(\bx_i)\rangle - y_{i,c} \right)^2 + \frac{\lambda}{2}\norm{\bw_c}^2
\end{align}
with $\lambda \ge 0$. It is well known that letting $K\in\R^{N\times N}$ be the kernel matrix given by $K_{ij}=\kr\left(\bx_i, \bx_j\right)$ and $Y:=\left(\by_1, \ldots, \by_N\right)^\top \in \R^{N\times C}$, the minimizer of \eqref{eq:krr} is given by a function $f$ satisfying \eqref{eq:f_form} with
\begin{align*}
    \alpha_{i,c} = \left(\left(K + N \lambda I\right)^{-1} Y \right)_{i,c} \quad \forall i\in [N], ~c\in[C].
\end{align*}
Unless stated otherwise, we take $\lambda = 0$. We find that the reconstruction attack is relatively insensitive to the choice of $\lambda$. Examples of reconstructions with $\lambda > 0$ can be found in Appendices \ref{fig:krr_laplace_cifar10_reg1e-3} and \ref{fig:krr_laplace_cifar10_reg1e-5}.
    
\paragraph{Support Vector Machines (SVM). }\label{sec:svm} For (hard) SVM, we consider the case where $f$ is trained to minimize the hinge loss, which for binary labels $y_i\in\{\pm 1\}$ is given by $\frac{1}{N}\sum_{i=1}^N \max\left(0, 1 - f(\bx_i) y_i \right). $
For multi-class classification, letting $y_i\in\{1,\ldots,C\}$ be the class labels, the hinge loss is defined as \citep{crammer2001algorithmic}
\begin{align*}
    \frac{1}{N}\sum_{i=1}^N \max\left(0, 1 + \max_{c\neq y_i} \langle \bw_c, \phi(\bx_i)\rangle - \langle \bw_{y_i}, \phi(\bx_i)\rangle\right),
\end{align*}
where $\bw_1,\ldots, \bw_C$ are the parameters of $f$. Unlike kernel regression, there is no closed-form solution for the minimizer of this loss. Nevertheless, we may consider an attacked model $f$ that was trained on this loss by gradient descent. As mentioned in \secref{sec:prelim}, training by gradient descent ensures that $f$ indeed is of the form \eqref{eq:f_form}. Due to the high-dimensionality of $\phi$, we express $f$ as in \eqref{eq:f_form} and train directly on $\alpha_{i,c}$ so that we never have to compute $\phi$ and $\bw$ directly. 

\paragraph{Kernel Density Estimation (KDE).}
Here, the task is to approximate a density function of a target distribution given only a finite number of samples. Specifically, given some points $\bx_1,\ldots, \bx_n \in \reals^d$ drawn from a distribution with unknown probability density function (PDF) $p$, kernel density estimation (KDE) is a well-known method to learn a function $f$ that estimates $p$. We next show how querying the function $f$ can leak the entire training data.

Consider the normalized Gaussian kernel given by 
{\small
\begin{align} \label{eq:krh}
    \kr_H(\bx,\bx') = \frac{1}{\sqrt{(2\pi)^{d}|H|}}\exp\left(-\frac{1}{2}\norm{H^{-\frac{1}{2}}(\bx-\bx')}_2^2\right) 
\end{align}
}
for a matrix $H \succ 0$. Then, KDE estimates $p$ using $f(\bx):= n^{-1}\sum_{i=1}^N\kr_H(\bx,\bx_i)$. Clearly, $f$ is a valid PDF since each $\kr_H(\bx,\bx_i)$ is a Gaussian when viewing $\bx_i$ as fixed. It is well known that the estimator $f$ converges to the true density function $p$ as the number of samples grows \citep{devroye2013probabilistic}. Furthermore, $f$ fits our framework as it can be written in the form given by \eqref{eq:f_form} with $C=1$ and $\alpha_i = 1/N$, $\forall i\in[N]$.

We next provide an illustrative example of our attack using a $2$-dimensional density estimation task. Suppose the underlying PDF $p$ is given by a mixture of two Gaussians, specifically $ p(\bx) = \frac{1}{2}{\cal N}(-\mu,I_2)+\frac{1}{2} {\cal N}\left(\mu,I_2\right), \mu=(2,2)^T.$ A KDE model $f$ is obtained by sampling $N=10$ points from $p$, $\{\bx_i\}_{i=1}^{10}$, and computing an estimator $f(\bx):=N^{-1}\sum_{i=1}^N\kr_H(\bx,\bx_i)$ with $\kr_H$ as in~\eqref{eq:krh}. For this example, suppose the attacked model $f$ picks $H$ according to what is known as Scott's rule \citep{scott2015multivariate}[Chapter $6$], i.e., $H$ is diagonal with $H_{jj}=n^{-1/6}\tilde{\sigma}_j$, where $\tilde{\sigma}_j$ is the empirical standard deviation of $\{\bx_i\}_{i=1}^N$ in the $j^\mathrm{th}$ coordinate.

We visualize our reconstruction attack in \figref{fig:2d_kde}. Given the attacked model $f$ as described above, our goal is to optimize the parameters $P$ (See \eqref{eq:params}). Here, we do not assume that $H$ is known and also learn an estimate of $H$ that we denote by $\hat{H}$. $P$ and $\hat{H}$ are optimized by sampling query points $\bz_j$ from a grid and minimizing the reconstruction loss given by \eqref{eq:loss_reconstruction}. Each step of the optimization produces an approximation to the attacked model of the form 
\begin{align}\label{eq:f_hat}
    \hat{f}(\bx) := \sum_{i=1}^n \hat{\alpha}_i \kr_{\hat{H}}(\bx, \hat \bx_i)
\end{align}
that becomes very close to the attacked model $f$ as the reconstruction loss \eqref{eq:loss_reconstruction} approaches $0$. As can be vividly seen in \figref{fig:2d_kde}, when $\hat{f}$ approaches $f$, the reconstructed training points $\hat{\bx}_i$ approach the true training points $\bx_i$. We will show in \secref{sec:theoretical} that this is not by coincidence and is a necessary condition for the loss to be minimized.

\begin{figure*}[t]
    \centering
    \includegraphics[width=\textwidth]{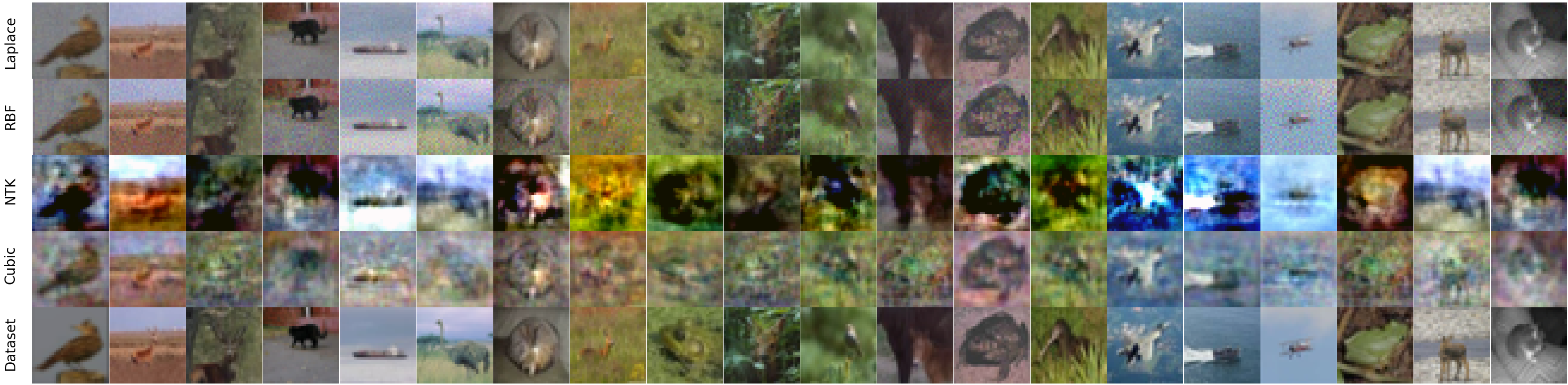}
    \caption{Top reconstructions from multiple kernel models trained on CIFAR10 images. Bottom row: training images from the dataset. Rows 1-4: reconstructions that are nearest to the bottom row images, obtained by attacking trained Laplace kernel, RBF kernel, cubic polynomial kernel, and NTK, respectively.}
    \label{fig:lap_lap}
\end{figure*}

\section{Theoretical Guarantees}\label{sec:theoretical}

In this section, we prove theoretically the effectiveness of the proposed reconstruction attack.
Our result will be stated for kernels which are \emph{strictly p.d.}, meaning that for every set of distinct points $\bx_1, ..., \bx_n \in \mathcal{X}$ the kernel matrix $K_{ij}=\kr(x_i,x_j)$ is strictly positive definite. It is well known that many common kernels satisfy this property, such as the NTK \citep{carvalho2025positivity} and bounded translation invariant kernels on $\R^d$ \citep{sriperumbudur2011universality}. This includes the Laplace and RBF kernels that we use in this paper.

We will require that the kernel be analytic, except for possibly isolated singularity points, such as in the Laplace kernel. This is important since non-analytic functions can be constant on a neighborhood of some inputs, making precise reconstruction from the values of the function impossible. Formally, we define the following mild condition, which is satisfied by all kernels we consider in this paper.

\begin{definition}\label{def:almost_analytic}
    We say a kernel is \emph{almost analytic} if there exists a countable family of $C^1(\Xcal)$ functions $\{ \Gamma _s :\mathcal{X}\rightarrow \mathcal{X}\}_{s  \in \mathbb{N}}$ such that for any $(\bx, \bz) \in \Xcal \times \Xcal$, if $\bz \notin \bigcup_{s\in\N}\{\Gamma_s(\bx)\}$ then there is a neighborhood around $(\bx, \bz)$ on which $\kr$ is analytic.
\end{definition} 

We are now ready to state our main theorem, which ensures reconstruction of both the attacked function as well as the underlying data used to train it.

\begin{theorem}\label{thm:general_uniqueness}
Let $\Xcal \subseteq \reals ^d$ be open and $\kr : \Xcal\times \Xcal \rightarrow \mathbb{R}$ be strictly p.d. and almost analytic. Let $\mathcal{D}$ be any distribution given by a density over $\Xcal$. Let $f$ be an attacked predictor as in \eqref{eq:f_form}, where the data $\{\bx_i\}_{i=1}^N$ are distinct and $\balpha_i \neq \zero$. Let $n \geq N$, $m > n(d+2)$, and let $\hat \alpha_{i,c}, \hat \bx _i$ be any solution to the minimization problem defined by the reconstruction loss in \eqref{eq:loss_reconstruction}, Then it holds with probability 1 over $\bz_1,...,\bz_m \sim \mathcal{D}^m$ that $f_c(\bv) = \hat{f}_c(\bv):=\sum_{i=1}^n \hat\alpha_{i,c} \kr(\bv, \hat\bx_i)$ for all $\bv\in\Xcal$, $c\in[C]$, and
\begin{align*}
    \forall ~i \in [N] ~~,~~ \exists ~j \in [n] \qquad \st \qquad \hat \bx_j = \bx_i.
\end{align*}
\end{theorem}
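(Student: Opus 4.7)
The plan is to first note that the reconstruction loss in \eqref{eq:loss_reconstruction} attains its minimum value $0$, witnessed by the trivial reconstruction $\hat\bx_i = \bx_i$ and $\hat\alpha_{i,c}=\alpha_{i,c}$ for $i \le N$ together with $\hat\alpha_{i,c}=0$ for $i>N$. So any minimizer must satisfy $\hat f_c(\bz_j) = f_c(\bz_j)$ for every $j \in [m]$ and $c \in [C]$, and the remaining work splits into two parts: (i) upgrade this finite set of coincidences to $\hat f_c \equiv f_c$ on all of $\Xcal$, and (ii) use $\hat f \equiv f$ together with strict positive definiteness to extract every training point.

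For (i) I would fix $c$ and study $g_\theta(\bz) := \hat f_c(\bz) - f_c(\bz)$, a function on $\Xcal$ parametrized by $\theta := (\hat\alpha_{\cdot,c}, \hat\bx) \in \R^n \times \Xcal^n$ of dimension $D := n(d+1)$. Almost analyticity of $\kr$ implies that, for each fixed $\theta$, $g_\theta$ is real-analytic on $\Xcal$ outside a countable singular set, so by the identity theorem either $g_\theta \equiv 0$ on $\Xcal$ or the zero set $Z_\theta := \{\bz : g_\theta(\bz) = 0\}$ is a real-analytic hypersurface of Hausdorff dimension at most $d-1$. Upgrading this pointwise-a.s. fact to a uniform statement over the uncountable parameter space is the heart of the argument. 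I would do it via an incidence-variety count: define
\begin{equation*}
    \tilde B := \{(\theta, \bz_1,\ldots,\bz_m) : g_\theta \not\equiv 0,\ g_\theta(\bz_j)=0\ \forall j\}.
\end{equation*}
Its fiber over each $\theta$ lies in $Z_\theta^m$, a set of Hausdorff dimension at most $m(d-1)$, so $\dim \tilde B \le D + m(d-1) = n(d+1) + m(d-1)$. Since $m > n(d+2) > n(d+1)$, the projection $\pi_2(\tilde B) \subseteq \Xcal^m$ has dimension strictly less than $md$, hence Lebesgue measure zero and, because $\Dcal$ admits a density, also $\Dcal^m$-measure zero. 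So outside a null event, every minimizer produces $\hat f_c \equiv f_c$ on all of $\Xcal$, for every $c$.

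For (ii), assume for contradiction that some $\bx_i \notin \{\hat\bx_j\}_{j=1}^n$. Enumerate the distinct points of $\{\bx_i\}_{i=1}^N \cup \{\hat\bx_j\}_{j=1}^n$ as $\bu_1,\ldots,\bu_M$ and collect coefficients to write, for each $c$,
\begin{equation*}
    0 \equiv f_c(\bz) - \hat f_c(\bz) = \sum_{k=1}^M \gamma_{k,c}\,\kr(\bz,\bu_k) \qquad \text{on all of } \Xcal.
\end{equation*}
Since $\balpha_i \neq \zero$ and $\bx_i$ has no $\hat\bx_j$ to cancel it, the coefficients $(\gamma_{k(i),c})_{c\in[C]}$ associated to $\bu_{k(i)} = \bx_i$ equal $(\alpha_{i,c})_{c\in[C]} = \balpha_i$, so some $\gamma_{k(i),c}$ is nonzero. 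Evaluating the identity at $\bz = \bu_\ell$ for $\ell=1,\ldots,M$ yields $K\boldsymbol{\gamma}_c = \zero$ with $K=(\kr(\bu_k,\bu_\ell))_{k,\ell}$ strictly positive definite and hence invertible, forcing $\boldsymbol{\gamma}_c = \zero$, the desired contradiction.

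The delicate step is the dimension/projection argument in (i): making it fully rigorous requires expressing $\tilde B$ as (a countable union of) semi-analytic sets so that Hausdorff dimension does not increase under projection, handling the countable non-analytic locus of each $g_\theta$ (and the associated failure of the identity theorem across disconnected regions of $\Xcal \setminus S_\theta$), and dealing with degenerate strata where reconstructed points coincide or coefficients vanish. The hypothesis $m > n(d+2)$, rather than the naive $m > n(d+1)$ that the raw dimension count suggests, presumably supplies the slack needed to absorb these strata.
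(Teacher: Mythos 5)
Your overall architecture matches the paper's: observe that zero loss is attainable so minimizers interpolate $f$ at the queries, show that with probability 1 the interpolation forces $\hat f_c\equiv f_c$ on all of $\Xcal$ via a dimension-counting argument on the set of ``bad'' query tuples, and then recover the training points by evaluating $f-\hat f$ on the union of true and reconstructed points and invoking invertibility of the strictly p.d.\ kernel matrix (your part (ii) is essentially the paper's Proposition~\ref{prop:unique} and is correct).

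The genuine gap is in part (i), and it is the step you yourself flag as delicate. The inequality $\dim \tilde B \le D + m(d-1)$ does not follow from the fiber bound $\dim Z_\theta^m \le m(d-1)$ for an arbitrary set: Hausdorff dimension of a set is not bounded by the dimension of the base plus the supremum of the fiber dimensions (graphs of functions $\R\to\R$ can have Hausdorff dimension $2$ even though every vertical fiber is a single point). Your proposed repair --- realizing $\tilde B$ as a countable union of semi-analytic sets --- is not available at the stated level of generality: the condition $g_\theta\not\equiv 0$ is an existential quantifier over $\Xcal$, semi-analytic sets are not closed under projection, and a general (almost) analytic kernel on a non-compact open $\Xcal$ is not definable in any o-minimal structure, so the fiber-dimension and projection formulas you would need are not guaranteed. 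The paper circumvents this entirely: it stratifies the bad set by the order $r_j$ of the first non-vanishing derivative of $g_\theta$ at each query (a countable stratification, valid because $g_\theta$ is analytic and $\not\equiv 0$ at non-violating points), augments the evaluation map with those derivatives so that it becomes a submersion in the query variables, and applies the Submersion Level Set Theorem to get a genuine local manifold of dimension $n(d+1)+md-m<md$ whose projection is null; second countability then yields a countable cover. Separately, the paper handles the non-analytic locus with a dedicated argument (the ``violating queries'' count via the parameterizations $\Gamma_s$), which is exactly where the extra $n$ in $m>n(d+2)$ is spent --- you correctly guessed this role but did not supply the argument. To close your proof you would essentially have to import both of these devices.
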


A few notes are in order. In the above theorem,  $\mathcal{X}$ is an open subset of $\mathbb{R}^d$, but see \cref{remark:manifolds} for a generalization to smooth kernels on submanifolds of dimension $d$ (such as $\mathbb{S}^d$). Furthermore, the distribution $\Dcal$ must be defined by a density, and in particular, we use the property that $\Dcal(E)=0$ for every $E \subseteq \mathcal{X}$ with $0$ (Lebesgue) volume. 

An important property of \thmref{thm:general_uniqueness} is that the size of the optimized training set $n$ can be chosen as an upper bound on the number of training points $N$ of the attacked model (which may be unknown). 
The result $\hat \bx _i, \hat \alpha_{i,c}$ of the optimization would then contain either repeated instances of a training point $\hat \bx_{i_1} = ... = \hat \bx_{i_t}=\bx_i$ (with $\forall c \in [C]: \alpha_{i,c}=\sum_{j=1}^t\hat \alpha _{i_j,c}$) or meaningless points $\hat \bx_i$ with zero coefficients $\forall c \in [C]: \hat \alpha_{i,c} = 0$.

The full proof is found in \cref{app:general_uniqueness_proof}. The theorem in fact follows from the case $C=1$, which can then be applied $\forall c \in [C]$. 
The first step is to show that w.p. 1 over $Z:=(\bz_1, ..., \bz_m) \sim \Dcal^m$, every predictor $\hat f(\bz) = \sum_{i=1}^n\hat \alpha_i\kr(\bz,\hat \bx_i)$ satisfying $\forall j \in [m]: f(\bz_j)=\hat f(\bz_j)$ actually satisfies $\forall \bz \in \Xcal: \hat f(\bz) = f(\bz)$. As we show in the appendix in \propref{prop:unique}, this implies that the training data of the minimizer $\hat f$ and the true predictor $f$ are identical.

The main tool in proving that $\hat f(\bz) = f(\bz)$ everywhere is the Submersion Level Set Theorem (see for instance, \citet{lee2012manifolds}). The theorem allows us to prove that when $m > n(d+2)$, the set of $m$-tuples of queries $Z=(\bz_1, ..., \bz_m)$ for which there exists a predictor $\hat f \not \equiv f$ of the above form satisfying $\forall j \in [m]: f(\bz_j)=\hat f(\bz_j)$, is contained in a projected manifold structure of dimension smaller than $md$, hence comprising a null set in $\Xcal^m$. The challenge with this proof strategy is ensuring that each query $\bz_j$ contributes a non-degenerate constraint on the training set, even when gradients of the predictor $\nabla_z\hat f(\bz_j)$ vanish; this is guaranteed by the analyticity condition on $\kr$, which implies that at least one higher order derivative of $\hat f$ does not vanish where $\hat f(\bz)=0$ (otherwise $\hat f \equiv 0$). For kernels $\kr$ which are analytic everywhere, $m > n(d+1)$ queries suffice; the additional $n$ queries are used in the proof to ensure that w.p. 1 over $(\bz_1,..., \bz_m)$, there is no predictor which has less than $n(d+1)$ analytic points among $(\bz_1,..., \bz_m)$.

Given that the above set of tuples for which uniqueness of the predictor does not hold is a null set in $\X^m$, we have the desired result, namely that w.p. 1 on $Z \sim \Dcal^m$, no such $Z$ is sampled, hence no such $\hat f$ exists for $Z$.

Note that when $C > 1$, the training points $\bx_i$ are shared by $f_1,...,f_C$, a property not utilized by \thmref{thm:general_uniqueness} which could potentially reduce the number of necessary queries. In this case, each query $\bz_j$ contributes $C$ constraints and the total number of parameters is $n(d+C)$, hinting that a bound closer to $m > \frac{n(d+C)}{C}$ queries should suffice to guarantee the usage of the Submersion Level Set Theorem. For this, we would need the predictors $f_1,...,f_C$ to be sufficiently "different". Specifically for every set of predictors $\hat f_1,...\hat f_C$ defined by linearly independent coefficient vectors $\hat{\bm{\alpha}}_{:,1}, ... \hat{\bm{\alpha}}_{:,C} \in \reals ^n$, the gradients of $\hat f_1,...,\hat f_C$ would need to be linearly independent at points $\bz$ where $\forall c\in [C]: \hat f_c(\bz)=0$. Note that these gradients are a set of $C$ vectors in $\reals^d$ (and typically $C\ll d$), so this is likely to occur in practice when running Algorithm \ref{alg:reconstruct}.

\begin{figure*}[t]
    \centering
    % Subfigure 1
    \begin{subfigure}[t]{0.45\textwidth}
        \centering        
        \includegraphics[width=\textwidth]{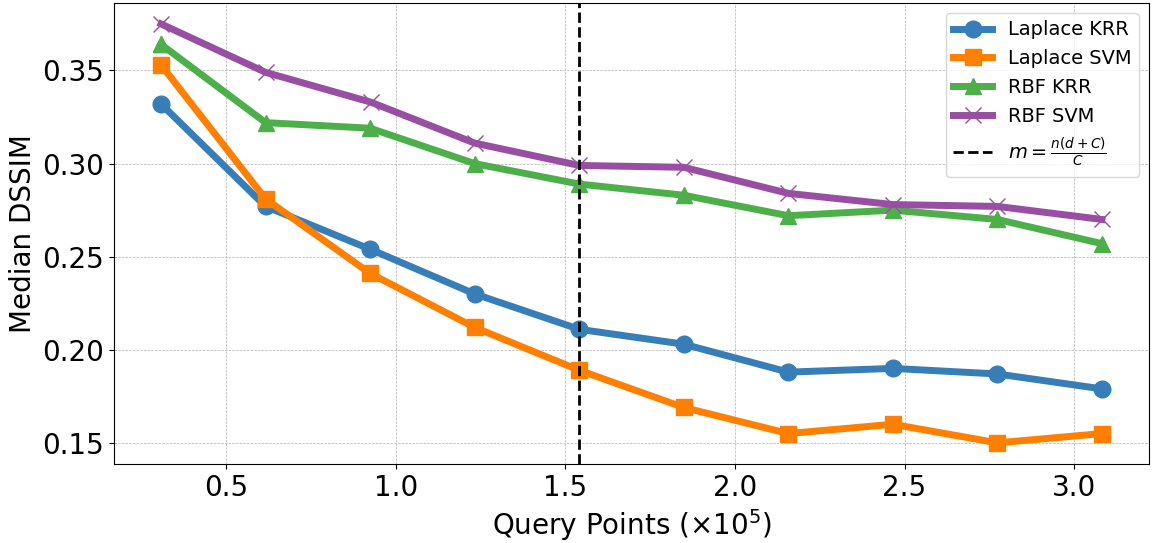}
        \label{fig:cifar_comparison}
        \vspace{-1em}
        \caption{}
    \end{subfigure}
    \hspace{0.01\textwidth}
    % Subfigure 2
    \begin{subfigure}[t]{0.45\textwidth}
        \centering
        \includegraphics[width=\textwidth]{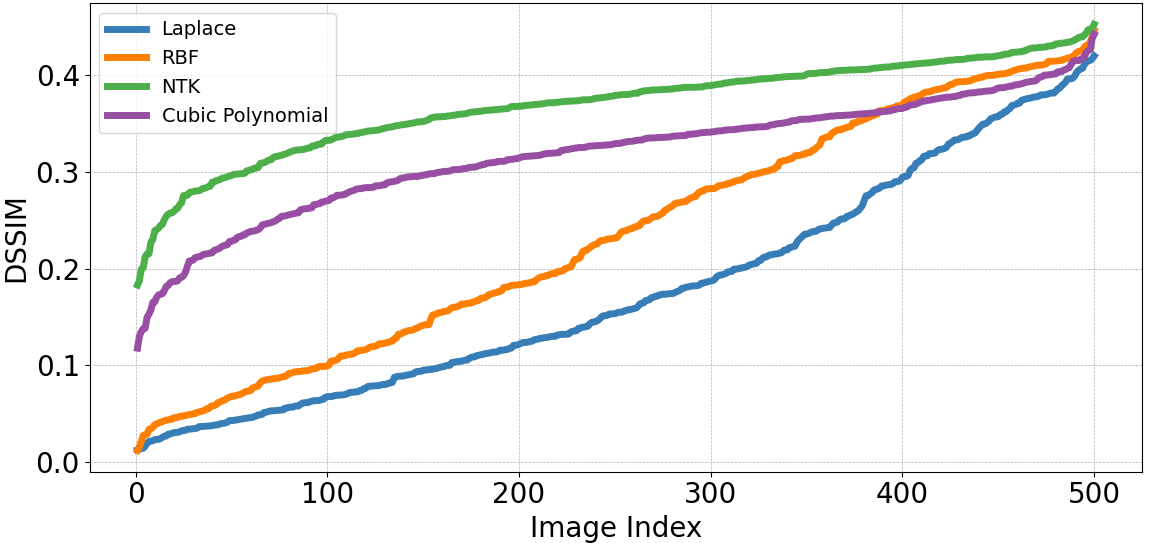}
        \label{fig:kernel_comparison}
        \vspace{-1em}
        \caption{}
    \end{subfigure}
    \caption{Reconstruction quality for different kernels and use cases on CIFAR10. Left: Comparison between kernel regression (KRR) and SVM with the Laplace and RBF kernels. Each point in the graph shows the median DSSIM obtained in a different run with a different number of query points. Right: Comparison between different kernels trained using KRR. In this figure, each graph shows a cumulative plot from one run, showing the quality of reconstruction, measured with DSSIM, obtained with the best $k$ images for $k \in [n]$.
    }
    \label{fig:comparisons}
    \vspace{-0.5em}
\end{figure*}

\section{Analysis}
\label{sec:analysis}
\paragraph{Comparison Between Different Kernels.}
We find that the choice of kernel affects the effectiveness of our attack. \figref{fig:comparisons} shows a quantitative evaluation of our results with different tasks (attacking KRR and SVM models) and with different choices of kernels (Laplace, RBF, NTK, and cubic polynomial). It appears that attacking the Laplace kernel is more effective than the RBF kernel, and both are more effective than cubic and NTK models. There is no clear difference between attacking KRR and SVM models.

The effectiveness of our attack is directly impacted by the precise form of the kernel. This is, however, a natural phenomenon that is to be expected. Consider for example a Laplace kernel given by $\kr(\bx, \bx') = \exp\left(-\gamma \norm{\bx - \bx'}_2\right)$ for some $\gamma > 0$. As $\gamma\to 0$, the kernel is close to a constant function, and as $\gamma\to\infty$, the kernel approaches a delta function. In both extremes, one can expect reconstruction to be numerically infeasible. We verify this intuition empirically in \figref{fig:laplace_gamma_comparison} where we plot the reconstruction quality for different values of $\gamma$ ranging from $0.01$ to $0.3$. We observe a U shape when plotting the median reconstruction quality measured by DSSIM (see \secref{sec:metrics} for details), whereby the DSSIM worsens when $\gamma$ is too large or too small. For reference, \citet{haim2022reconstructing} considered reconstructions with a DSSIM less than 0.3 as high-quality. In any case, extreme values of $\gamma$ are less useful for regression and classification tasks and are, therefore, unlikely to be used in practice. 

\paragraph{Number of Query Points Needed.} Since our attack requires sampling $m$ query points $\bz_i$, one may wonder how many query points are needed to obtain good reconstructions. \thmref{thm:general_uniqueness} gives a strict bound that $m>n(d+2)$ is sufficient. However, the discussion following the theorem as well as the empirical experiments suggest that in practice, even fewer query points may suffice, closer to the order of $\frac{n(d+C)}{C}$. 

Practically, we observe (see \figref{fig:comparisons}) that there is no hard threshold for $m$ under which reconstructing the training set is impossible. Instead, we observe a relatively steady improvement as $m$ increases. Importantly, because the complexity of our reconstruction attack does not depend on any parameter count, even when $m$ is very large, the attack may still be efficient to compute.

\paragraph{Dimensionality Reduction.}
The number of query points needed to reconstruct the training data depends on the dimension of the reconstructed points. It is, therefore, natural to consider reducing the dimension of $\hat\bx_i$ to decrease the number of query points needed. Specifically, consider a function $\Psi:\R^k\to\R^d$ that "decodes" or "upscale" vectors of dimension $k\ll d$ into vectors of dimension $d$. One may thus let $\hat{\bx}_i=\Psi(\hat\bv_i)$ for some vectors $\hat\bv_i\in\R^k$, and minimize the reconstruction objective (\eqref{eq:loss_reconstruction}) by optimizing over $\hat\bv_i$ instead of $\hat\bx_i$. Explicitly, we minimize the loss
\begin{align*}
    \Lcal_{\rec}\left(\left\{\hat \alpha_{i,c}\right\}_{i\in [n], c\in[C]} ~\bigcup~ \left\{\Psi\left(\hat\bv_i\right) \right\}_{i\in[n]} \right).
\end{align*}

Perhaps the simplest way to perform this dimensionality reduction is with PCA. As can be seen in \figref{fig:pca_comparison}, reducing the dimension by a certain factor with PCA allows to reduce the number of query points by a similar factor while maintaining a similar reconstruction quality. 

We may also consider reducing the dimension with an autoencoder. Specifically, we let $\Psi$ be the decoder from a pretrained VAE. 

For our experiments, we choose TAESD3 \citep{taesd_repo}, a small, distilled version of the VAE used in Stable Diffusion 3 \citep{esser2024scaling}. The full set of reconstructions can be found in \figref{fig:krr_laplace_celeba_vae}.
\section{Experiments} \label{sec:experiments}
\paragraph{Datasets.} For our experiments, we use the CIFAR10, CIFAR100 \citep{krizhevsky2009learning}, and celebA \citep{liu2015faceattributes} datasets. (Results on CIFAR100 are shown in the Appendix.) Following the convention set by previous papers on dataset reconstruction \citep{haim2022reconstructing, loo2023understanding, oz2024reconstructing, buzaglo2024deconstructing}, we set the number of samples at $N=500$. To allow for more samples $\bz_j$ to be used for loss in \eqref{eq:loss_reconstruction}, we redistribute the original train-test split. Furthermore, for the CIFAR10 dataset, we obtain more query points $\bz_j$ by leveraging synthetic data. Specifically, we use the CIFAR-5M dataset \citep{nakkiran2020deep} that includes roughly 6 million artificially generated images that are similar to CIFAR10. Importantly, we picked the images used to train $f$ so that they were not in the training data used for the model that generated CIFAR-5M.

\paragraph{Metrics.}\label{sec:metrics} Following \citet{haim2022reconstructing, loo2023understanding} we use both Structural Dissimilarity (DSSIM) and $L_2$ as our main metric to indicate the quality of our reconstructions. DSSIM is based on SSIM \citep{wang2004image} and defined as \(\text{DSSIM}(\bx,\bx')=\frac{1-\text{SSIM}(\bx,\bx')}{2}\), we report the percentiles computed by finding the nearest reconstruction to each of the training points. To determine the percentage of the dataset reconstructed, we compute the DSSIM between all pairs of reconstructions and training points and count the pairs of mutual nearest neighbors (meaning a reconstruction that is the closest to a certain training point out of all reconstructions and also vice versa).

\subsection{Comparison to other Reconstruction Schemes} 
To the best of our knowledge, there is no method that is directly comparable to ours, as we are the first to reconstruct training data from kernels whose feature dimension is infinite. The following serves as the closest option, and we highlight some of the key differences between our settings and theirs: 

\textbf{\citep{loo2023understanding} RBF} - We adapted the attack of \citet{loo2023understanding} to kernel regression with the RBF kernel. Specifically, since their attack requires computing the feature map explicitly, it is not directly applicable in infinite-dimensional feature spaces. To overcome this, we apply their attack to a random Fourier feature approximation of the RBF kernel  \citep{rahimi2007random} using 400k random features. Their method further initializes twice the number of intended reconstruction candidates ($n=1000$) since this improved their results. We also note that in this setting, the attack for MSE-loss trained models of \citet{haim2022reconstructing, buzaglo2024deconstructing} is similar to the attack of \citet{loo2023understanding}. 

We also include in \appref{app: past_works} the metrics for attacks that work on other models, such as those of \citet{haim2022reconstructing, buzaglo2024deconstructing, loo2023understanding}. It is difficult to draw a meaningful comparison as the settings are incomparable. Nevertheless, these do serve as a rough baseline for what metrics one should expect. Our reconstruction attack reaches on par or better scores across all metrics.

We report the results in Table~\ref{tab:method_comparison}. The comparison is carried out on models trained on the same images of CIFAR10. 
In the most similar comparison, our method on RBF kernels outperforms the corresponding comparison with \citep{loo2023understanding} by a large margin despite being a query-only method. 
We also outperform or achieve comparable performance to all other methods. Strikingly, our results indicate that merely hiding parameters is an insufficient defense mechanism for data reconstruction attacks.

In Table~\ref{tab:celebA_comparison}, we further compare different kernels and use cases on the celebA dataset. This includes Laplace and RBF kernels trained by either KRR or SVM. We also list the results when reconstructing using a VAE.

In Table~\ref{tab:recovery_comparison}, we verify that our method is not sensitive to knowing the exact number of training samples $n$. In fact, using a larger number of reconstruction candidates allows us to reconstruct a larger portion of the dataset.

\begin{table*}[h!]
\caption{Comparison of different methods on CIFAR-10, $n=N=500$. Our reconstructions here are for kernel regression. The best result in each column is in bold, and the second best is underlined. }
\scriptsize
\centering
\renewcommand{\arraystretch}{1.2}
\setlength{\tabcolsep}{6pt} % Adjusts column spacing
\begin{tabular}{l | c c | c c c | c c c }
\toprule
\multirow{2}{*}{\textbf{Method}} & \multicolumn{2}{c|}{\% of Dataset Reconstructed $\uparrow$} & \multicolumn{3}{c|}{DSSIM $\downarrow$} & \multicolumn{3}{c}{\textbf{$L_2$} $\downarrow$} \\
& Total & DSSIM $< 0.3$ & 25\% & 50\% & 75\% & 25\% & 50\% & 75\% \\
\midrule
\citep{loo2023understanding} RBF (Approximation) & 46.2\% & 42.8\% & 0.209 & 0.311 & 0.369 & 4.484 & 7.978 & \underline{10.738} \\
Ours RBF & \underline{67.8\%} & \underline{62.2\%} & \underline{0.12} & \underline{0.232} & \underline{0.351} & \underline{2.145} & \underline{4.091} & 10.746 \\
\midrule
Ours Laplace & \textbf{81.2}\% & \textbf{77\%} & \textbf{0.079} & \textbf{0.154} & \textbf{0.258} & \textbf{1.621} & \textbf{2.898} & \textbf{6.028} \\
Ours NTK & 23.2\% & 4.6\% & 0.343 & 0.379 & 0.405 & 12.870 & 14.850 & 16.350 \\
Ours Cubic Polynomial & 26.0\% & 24.2\% & 0.286 & 0.329 & 0.359 & 7.735 & 10.326 & 12.384 \\
\bottomrule
\end{tabular}
\label{tab:method_comparison}
\end{table*}

\begin{table*}[h]
\caption{Reconstructions from the celebA dataset with our method, with various kernels and tasks. The best result in each column is in bold, second best is underlined. }
\scriptsize
\centering
\renewcommand{\arraystretch}{1.2}
\setlength{\tabcolsep}{6pt} % Adjusts column spacing
\begin{tabular}{l | c c | c c c | c c c | c}
\toprule
\multirow{2}{*}{\textbf{Method}} & \multicolumn{2}{c|}{\% of Dataset Reconstructed $\uparrow$} & \multicolumn{3}{c|}{DSSIM $\downarrow$} & \multicolumn{3}{c|}{\textbf{$L_2$} $\downarrow$} & Resolution\\
& Total & DSSIM $< 0.3$ & 25\% & 50\% & 75\% & 25\% & 50\% & 75\%  \\
\midrule
Laplace KRR & \textbf{81.2} \% & 57\% & 0.239 & 0.289 & 0.328 & \underline{7.770} & \textbf{10.356} & \textbf{13.848}  & $64\times 64$  \\
RBF KRR & 57.4\% & 40.4\% & 0.224 & 0.338 & 0.378 & \textbf{6.423} & \underline{12.268} & \underline{21.585}  & $64\times 64$  \\
Laplace SVM & 25.0\% & 1.8\% & 0.369 & 0.39 & 0.405 & 16.900 & 19.523 & 23.439  & $64\times 64$ \\
RBF SVM & 50.8\% & 3.4\% & 0.353 & 0.381 & 0.400 & 13.052 & 16.665 & 25.753  & $64\times 64$ \\
Laplace KRR (using VAE) & \underline{79.6\%} & \textbf{78.6\%} & \textbf{0.162} & \textbf{0.205} & \textbf{0.253} & 14.031 & 18.973 & 28.688  & $128\times 128$ \\
RBF KRR (using VAE) & 60.2\% & \underline{58.6\%} & \underline{0.17} & \underline{0.245} & \underline{0.287} & 14.88 & 23.778 & 41.163  & $128\times 128$ \\
\bottomrule
\end{tabular}
\label{tab:celebA_comparison}
\end{table*}

\begin{table*}[h!]
\caption{Training data recovery when the size of the training data $N=500$ is unknown. Each row in the table shows the optimization of \eqref{eq:loss_reconstruction} with $n\in \{300,400,600,700\}$. The metrics are calculated with respect to the top $\min(n,N)$ reconstructions. The attacked model is trained for kernel regression with the Laplace kernel on CIFAR10.}
\scriptsize
\centering
\renewcommand{\arraystretch}{1.2}
\setlength{\tabcolsep}{6pt} % Adjusts column spacing
\begin{tabular}{l | c c | c c c | c c c}
\toprule
\textbf{Reconstruction} & \multicolumn{2}{c|}{\% of Dataset Reconstructed $\uparrow$} & \multicolumn{3}{c|}{DSSIM $\downarrow$} & \multicolumn{3}{c}{\textbf{$L_2$} $\downarrow$} \\
\textbf{Candidates}& Total & DSSIM $< 0.3$ & 25\% & 50\% & 75\% & 25\% & 50\% & 75\% \\
\midrule
300 & 83.33\% & 67.67\% & 0.195 & 0.327 & 0.384 & 3.163 & 7.787 & 12.543  \\
400 & 82.25\% & 71.5\% & 0.118 & 0.235 & 0.353 & 2.088 & 4.607 & 10.825 \\
600 & \underline{92.2\%} & \underline{91.2\%} & \underline{0.05} & \underline{0.091} & \underline{0.157} & \underline{1.225} & \underline{1.964} & \underline{3.25} \\
700 & \textbf{96.6\%} & \textbf{96.6\%} & \textbf{0.038} & \textbf{0.069} & \textbf{0.114} & \textbf{1.014} & \textbf{1.652} & \textbf{2.544} \\
\bottomrule
\end{tabular}
\label{tab:recovery_comparison}
\end{table*}

\section{Discussion and Limitations} 
We presented a data reconstruction attack for kernel methods that works by querying the attacked model at various points. 

We proved uniqueness for strictly p.d. kernels and showed that minimizing our reconstruction loss implies reconstructing the training data. An interesting direction for future work is to provide a more complete picture of the loss landscape, specifically, to characterize the difficulty of reaching \emph{near} a minimizer of the loss. 

It is also worthwhile to note that all reconstruction attacks in this paper were performed on a single GPU. Since query points do not require labels, and natural images are abundant, the reconstruction loss could potentially be computed with several orders of magnitude more query points than in this paper. We thus see expanding this attack to the scale found in state of the art models as an interesting avenue for future research.

Lastly, we note that all datasets used in this work are common and public. In particular, no sensitive private data is exposed throughout this work. We believe that the potential benefits of shedding light on potential privacy risks by publicizing this reconstruction attack far outweigh any potential risks.

\section*{Acknowledgements}
Research was partially supported by the Israeli Council for Higher Education (CHE) via the Weizmann Data Science Research Center, by the MBZUAI-WIS Joint Program for Artificial Intelligence Research and by research grants from the Estates of Tully and Michele Plesser and the Anita James Rosen and Harry Schutzman Foundations.

\bibliographystyle{plainnat}
\bibliography{citations}

%%%%%%%%%%%%%%%%%%%%%%%%%%%%%%%%%%%%%%%%%%%%%%%%%%%%%%%%%%%%%%%%%%%%%%%%%%%%%%%
%%%%%%%%%%%%%%%%%%%%%%%%%%%%%%%%%%%%%%%%%%%%%%%%%%%%%%%%%%%%%%%%%%%%%%%%%%%%%%%
% APPENDIX
%%%%%%%%%%%%%%%%%%%%%%%%%%%%%%%%%%%%%%%%%%%%%%%%%%%%%%%%%%%%%%%%%%%%%%%%%%%%%%%
%%%%%%%%%%%%%%%%%%%%%%%%%%%%%%%%%%%%%%%%%%%%%%%%%%%%%%%%%%%%%%%%%%%%%%%%%%%%%%%
\newpage
\appendix
\onecolumn

\numberwithin{figure}{section}

\section{Implementation Details}
Below, we detail the implementation details. Every reconstruction attack in this paper was run on a single NVIDIA H100 GPU. 
Runtimes varied, but as a reference, attacks on CIFAR10 that used $m=500,000$ and ran for $300,000$ gradient descent steps ran for roughly $11$ hours. 
We did not try to optimize runtimes, and saw in our internal experiments that many fewer query points and gradient steps are necessary for decent results. As such, we infer that good results can also be obtained in a fraction of that time.

\subsection{Hyperparameters of the Reconstruction Algorithm} 
Unless specifically stated otherwise, all runs in the paper were with the hyper parameters listed below.
\paragraph{Number of Query Points}We use $m=500,000$ for CIFAR10, $m=200,000$ for CIFAR100 (generated from $50,000$ images with vertical and horizontal flips) and $m=162,770$ for celebA.

\paragraph{Initialization}We initialize our reconstructions $\hat{\bx}_i \overset{i.i.d.}{\sim}N(0,0.3I_d)$ and $\hat{\alpha}_i\overset{i.i.d.}{\sim}N(0,0.05I_d)$ unless specifically stated otherwise. For the NTK we initialized $\hat{\alpha}_i\overset{i.i.d.}{\sim}N(0,0.5I_d)$
\paragraph{Optimization Process}We use Adam optimizer \citep{kingma2014adam} with \(\beta_1=0.9,\beta_2=0.99\). We use OneCycle \citep{smith2019super} learning rate scheduler with a maximal base rate of $2e-2$ for the reconstructions and $1e-2$ for $\hat{\alpha}_i$, warmup of 15\% of the optimization process, div factor of 10, and final div factor of 100 and three phase scheduling. We run our attack for 300,000 steps. 

\subsection{Data Representation}
For the CIFAR10 and CIFAR100 \citep{krizhevsky2009learning} datasets we consider the labels as one-hot vectors normalized to have zero mean and unit variance. For the celebA \citep{liu2015faceattributes} resize and crop the images to be at a resolution of $64\times64$. When using a VAE for the reconstruction we resize celebA images to $128\times 128$. The task in this dataset is multi-label classification, and thus, we represent the label of each of the 40 attributes as $\pm 1$.

\subsection{Kernels}
The precise formulation of all kernels used in this paper can be found in \appref{app:relevant_kernels}. For the Laplace kernel, we use $\gamma=0.15$ for $32\times32$ images and $0.03$ for $64\times64$ or $128\times128$. For the RBF, we use $\gamma=0.003$ for $32\times32$ images, $0.0005$ for $64\times64$ and $0.0001$ for $128\times128$. For the polynomial kernel, we take a cubic polynomial ($\ell=3$) with $c_0=1$ and $\gamma=0.001$. For the NTK we use $L$=3.

\subsection{Training the Attacked Model}
For $f$ trained with kernel regression, we compute the minimizer explicitly as written in \secref{sec:ker_reg}. For SVM, as briefly mentioned in \secref{sec:svm}, we express $f$ as in \eqref{eq:f_form} and optimize $\alpha_{i,c}$ on the hinge loss by gradient descent. We perform $100,000$ iterations with a learning rate of $1e-2$ and OneCycle scheduler. This ensures that in all of the experiments in this paper, the parameters converge. 

\subsection{PCA}
If we know the underlying distribution of the data, we can compute a matrix $U \in \R^{d \times k}$ whose $k$ columns are orthonormal vectors corresponding to the most influential directions and define $\Psi(\hat\bv_i) = U\hat\bv_i$. For standard image datasets, one may take $k$ to be much smaller than $d$ such that the projections $UU^\top \bx$ are visually very similar to the original images.

Nevertheless, when optimizing $\hat{\bv_i}$ this way, an undesired side-effect is that the ``implicit-bias'' of the optimizer is changed. All of our experiments use Adam, which adjusts the learning rate of each parameter being optimized. In our case, this amounts to adjusting the learning rate per pixel. PCA changes the basis, so now the learning rate is adjusted per principal direction. We found this to be undesired, so instead of initializing $\hat{\bv}_i$ to be of dimension $k$, we initialize $\hat{\bv}_i$ to be of dimension $d$ and set $\hat{\bx}_i= UU^\top \hat{\bv}_i$ to be its projection.  

\subsection{Special Considerations. }
For the polynomial kernel, we add a small loss term that discourages noisy reconstructions. We observe that this is not strictly necessary, but slightly improves results. We also used a learning rate of $5e-3$ for both the reconstructions as well as $\hat{\alpha}_i$. For the NTK we used a base learning rate of $1e-4$ for the reconstructions and $1e-3$ and $\hat{\alpha}$ respectively, and ran the optimization for 1,000,000 steps.

\section{Proofs}
\subsection{Uniqueness of Representations}
\begin{proposition}\label{prop:unique}
Let $\kr$ be a strictly p.d. kernel, and suppose that we have two functions given by 
\begin{align*}
    f(\bz) = \sum_{i=1}^N \alpha_{i} \kr(\bz, \bx_i), \quad\quad \hat{f}(\bz) = \sum_{i=1}^{n} \hat\alpha_{i} \kr(\bz, \hat\bx_i)
\end{align*}
where $\forall i\in[n], i' \in [N], \alpha_i, \hat{\alpha}_{i'} \neq 0$, and $\forall i\neq j\in[n], i' \neq j' \in [N]: \bx_i\neq \bx_j$ and $\hat \bx _{i'} \neq \hat \bx _{j'}$. Then if the functions are equal, i.e,  $\forall \bz\in\Xcal, f(\bz) = \hat{f}(\bz)$, then $n=N$ and, up to permutation, $\bx_i = \hat{\bx}_{i}$ and $\alpha_i = \hat{\alpha}_{i}$.  
\end{proposition}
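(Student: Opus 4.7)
The strategy is to combine the two representations into a single linear combination of kernel evaluations at the union of all centers, then invoke strict positive definiteness to force every coefficient to vanish.

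First, let $\{\by_1, \ldots, \by_M\}$ denote the distinct points in $\{\bx_1, \ldots, \bx_N\} \cup \{\hat\bx_1, \ldots, \hat\bx_n\}$. For each $k \in [M]$, define a combined coefficient
\begin{align*}
    \beta_k := \sum_{i \,:\, \bx_i = \by_k} \alpha_i ~-~ \sum_{j \,:\, \hat\bx_j = \by_k} \hat\alpha_j.
\end{align*}
Since the $\bx_i$ are distinct among themselves and similarly for the $\hat\bx_j$, each sum above contains at most one nonzero term, so $\beta_k$ takes one of the forms $\alpha_i$, $-\hat\alpha_j$, or $\alpha_i - \hat\alpha_j$, depending on whether $\by_k$ appears in only one list or both. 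The hypothesis $f \equiv \hat f$ then rewrites as $\sum_{k=1}^M \beta_k \kr(\bz, \by_k) = 0$ for all $\bz \in \Xcal$.

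Next, I would evaluate this identity at $\bz = \by_1, \ldots, \by_M$ and collect the resulting equations as $K\boldsymbol{\beta} = \zero$, where $K$ is the $M \times M$ kernel matrix with $K_{jk} = \kr(\by_j, \by_k)$. Equivalently, the quadratic form gives $\boldsymbol{\beta}^\top K \boldsymbol{\beta} = \sum_j \beta_j\bigl(\sum_k \beta_k \kr(\by_j,\by_k)\bigr) = 0$. Because the $\by_k$ are distinct and $\kr$ is strictly p.d., the matrix $K$ is positive definite, which forces $\boldsymbol{\beta} = \zero$.

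Finally, I would translate $\beta_k = 0$ back into a statement about the two original representations. If $\by_k$ lies in $\{\bx_i\}$ but not in $\{\hat\bx_j\}$, then $\beta_k = \alpha_{i(k)} \neq 0$ by assumption, a contradiction; symmetrically for points lying only in $\{\hat\bx_j\}$. Hence the two center sets must coincide, yielding $n = N$ after matching indices by a permutation. For each matched pair, $\beta_k = \alpha_{i(k)} - \hat\alpha_{j(k)} = 0$ gives the equality of coefficients. No single step looks like a serious obstacle; the main point of care is simply the combinatorial bookkeeping that ensures each $\beta_k$ is well-defined and that the nonvanishing hypotheses on $\alpha_i, \hat\alpha_j$ are used to rule out one-sided centers.
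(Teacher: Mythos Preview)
Your proposal is correct and essentially identical to the paper's own proof: both take the union of centers, rewrite $f-\hat f$ as a single kernel combination with coefficients $\beta_k$ (the paper calls them $b_i$), evaluate at the union points to obtain $K\boldsymbol{\beta}=0$, invoke strict positive definiteness to force $\boldsymbol{\beta}=0$, and then use the nonvanishing hypotheses on $\alpha_i,\hat\alpha_j$ to conclude that the center sets coincide and the coefficients match. The only cosmetic difference is that you add the quadratic-form observation $\boldsymbol{\beta}^\top K\boldsymbol{\beta}=0$, which is redundant once you have $K\boldsymbol{\beta}=0$ and $K$ invertible.
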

\begin{proof}
Define $U = \{\bu_1,...,\bu_r \} := X \cup \hat X$ where $ X = \{\bx_1, ..., \bx_N\}$, $\hat X = \{\hat \bx _1, ... \hat \bx_{n} \}$ and $r:=\abs{X \cup \hat X} \leq n+N$.

Rewrite the difference $ f(\bz)-\hat f(\bz) = \sum_{i=1}^N\alpha_i \kr (\bz,\bx_i) - \sum_{j=1}^{n} \hat \alpha_j \kr(\bz, \hat \bx_j)$ as $\sum_{i=1}^r b_i \kr(\bz,\bu_i)$ for suitable $b_i\in\R$. 
Under our assumption that $f \equiv \hat f$, for $j=1,...,r$, substituting $\bz=\bu_j$ we get $\sum_{i=1}^r b_i \kr(\bu_j,\bu_i)=0$. That is, we have the linear system $K_U{\bm{b}}=0$, where $K_U$ is the kernel matrix of $U$. 

Since $K_U$ is strictly positive definite, it is invertible, so we conclude $\bm{b} = 0$. Notice each $b_i$ is either $\alpha_i$ for some $i\in [n]$, $-\hat \alpha_j$ for some $j\in[N]$, or $\alpha_i -\hat \alpha_j$ for some $i,j$. Because of the assumption that $\alpha_i,\hat \alpha_j$ are nonzero, only the latter case is possible. This means each $\bx_i$ and $ \hat \bx_j$ belong to the intersection $X \cap \hat X$, so $X \subseteq X \cap \hat X, \hat X\subseteq X\cap \hat X$, implying $X=X\cap \hat X=\hat X=U$. So, $r=n=N$ and, assuming w.l.o.g that $X$ and $\hat X$ are indexed in the same order, $b_i=\alpha_i-\hat \alpha_i$. Since $b_i=0$ this gives $\alpha_i=\hat \alpha_i$, so we are finished.

\end{proof}

\subsection{Proof of \thmref{thm:general_uniqueness}}\label{app:general_uniqueness_proof}

\begin{definition}
A function $f:\mathbb{R}^d \rightarrow \mathbb{R}$ is said to be \textit{analytic} on an open set $U$ if for every $\bx \in U$, $f$ is given as a convergent power series in a neighborhood $V \subseteq U$ of $\bx$. We say $f$ is analytic at $\bx$ if $f$ is analytic in some neighborhood of $\bx$.
\end{definition}

\begin{definition}
Let $\mathcal{X}$ be any input space, and $\kr : \Xcal \times \Xcal \rightarrow \mathbb{R}$ a kernel. The \emph{evaluation function} of $\kr$ is the function $g(X, \bm{\alpha}, \bz)=\sum_{i=1}^n\alpha_i \kr(\bx_i,\bz)$ for a training set $X=(\bx_1,...,\bx_n) \in \Xcal ^n$, coefficients $\bm{\alpha} \in \reals ^n$, and a point $z \in \Xcal$. We denote also $g_{X, \bm{\alpha}}(z)=g(X, \bm{\alpha}, \bz)$ and write $g_{X, \bm{\alpha}} \not \equiv 0$ if $\exists z \in \Xcal: g_{X, \bm{\alpha}}(z) \neq 0$.
\end{definition}

Recall \defref{def:almost_analytic} of an \emph{almost analytic} kernel.

We restate \thmref{thm:general_uniqueness} to include explicit guarantees on the values of the coefficients $\hat \alpha_{i,c}$. Notice this statement implies \thmref{thm:general_uniqueness}.

\begin{theorem}\label{thm:general_uniqueness_full}
Let $\Xcal \subseteq \reals ^ d$ be open, and $\kr :\Xcal \times \Xcal \rightarrow \mathbb{R}$ be an almost analytic, strictly p.d. kernel. Let $\mathcal{D}$ be any distribution given by a density over $\Xcal$. Let $f_c=\sum_{i=1}^N\alpha_{i,c}\kr(\bz,\x_i)$ for $c \in [C]$ be $C$ predictors as in \eqref{eq:f_form}, where $\forall i\neq j\in[N]: \bx_i\neq \bx_j$, and $\forall i\in [N],\exists c\in[C]: \alpha_{i,c} \neq 0$. Let $n \geq N$ and $m > n(d+2)$, and let $\hat \alpha_{i,c}, \hat \bx _i$ be any solution to the minimization problem defined by the reconstruction loss in \eqref{eq:loss_reconstruction}. Then it holds with probability 1 over $Z = (\bz_1,...,\bz_m) \sim \mathcal{D}^m$, up to permutation and summation of terms of identical $\hat \bx _i$, that  $\forall i \in [N], c \in [C]: \hat \alpha_{i,c} = \alpha_{i,c}, \hat \bx _i = \bx _i$, and $\forall N < i \leq n, c \in [C]: \hat \alpha_{i,c} = 0$.
\end{theorem}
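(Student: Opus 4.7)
\textbf{Reduction and use of Proposition~\ref{prop:unique}.} The reconstruction loss $\Lcal_{\rec}$ is nonnegative and attains $0$ at the ``true'' configuration $\hat\bx_i=\bx_i$, $\hat\alpha_{i,c}=\alpha_{i,c}$ for $i\leq N$ and $\hat\alpha_{i,c}=0$ for $N<i\leq n$; hence every minimizer satisfies $\hat f_c(\bz_j)=f_c(\bz_j)$ for all $j\in[m]$ and $c\in[C]$. It therefore suffices to show: with probability $1$ over $Z\sim\Dcal^m$, for each $c$ no predictor $\hat f_c(\bz)=\sum_{i=1}^n\hat\alpha_{i,c}\kr(\bz,\hat\bx_i)$ with $\hat f_c\not\equiv f_c$ agrees with $f_c$ at $\bz_1,\dots,\bz_m$. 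Once $\hat f_c\equiv f_c$ on $\Xcal$ is established for every $c$, merging identical $\hat\bx_i$, discarding those with all coefficients zero, and then applying Proposition~\ref{prop:unique} class by class pairs the surviving $(\hat\bx_i,\hat\alpha_{i,c})$ with $(\bx_i,\alpha_{i,c})$, giving the stated conclusion.

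\textbf{Main argument via the Submersion Level Set Theorem.} Fix $c$ and write $f=f_c$. Consider
\[
\Phi:\Xcal^n\times\R^n\times\Xcal^m\longrightarrow\R^m,\qquad \Phi(\hat X,\hat{\bm\alpha},Z)_j \;=\; f(\bz_j)-\sum_{i=1}^n\hat\alpha_i\,\kr(\bz_j,\hat\bx_i),
\]
and let $B=\{(\hat X,\hat{\bm\alpha},Z)\in\Phi^{-1}(0):\hat f_{\hat X,\hat{\bm\alpha}}\not\equiv f\}$ with $\pi_Z$ the projection onto the last factor. The goal is $\mathrm{vol}(\pi_Z(B))=0$, which, by absolute continuity of $\Dcal$, forces $\Dcal^m(\pi_Z(B))=0$. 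The plan is to stratify $B$ into countably many pieces on each of which $d\Phi$ has full rank $m$; on such a piece the Submersion Level Set Theorem identifies $\Phi^{-1}(0)$ with a smooth submanifold of dimension $nd+n+md-m$, so that $\pi_Z(B)$ lies in a countable union of smooth images of dimension at most $nd+n+md-m$. Requiring this to be strictly less than $md$ yields the naive parameter-counting bound $m>n(d+1)$.

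\textbf{Handling degenerate derivatives via almost analyticity.} The row of $d_Z\Phi$ at $\bz_j$ equals $\nabla_{\bz}(f-\hat f)(\bz_j)$ and may vanish; when it does, a constraint must be extracted from a higher-order derivative. This is where Definition~\ref{def:almost_analytic} enters: away from the locus $\bigcup_{i,s}\{\Gamma_s(\hat\bx_i)\}$, the difference $f-\hat f$ is analytic at $\bz_j$, so when $\hat f\not\equiv f$ the identity theorem guarantees that some mixed partial of $f-\hat f$ is nonzero at $\bz_j$, cutting out a codimension-$1$ locus in $\bz_j$ even when the gradient vanishes. For each candidate $(\hat X,\hat{\bm\alpha})$, the singular set $\bigcup_{i\leq n,s\in\N}\Gamma_s(\hat\bx_i)$ is a countable union of $C^1$-graphs and thus has Lebesgue measure zero in $\Xcal$; by a Fubini argument, the set of $Z\in\Xcal^m$ containing more than $n$ singular queries (for any candidate) is also measure zero. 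Restricting to the complement leaves at least $m-n$ analytic queries per configuration, and, plugging $m-n$ into the parameter-count above, it suffices to have $m-n>n(d+1)$, i.e. $m>n(d+2)$.

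\textbf{Conclusion and main obstacle.} Collating the strata, $\mathrm{vol}(\pi_Z(B))=0$, so with probability $1$ over $Z$ no such $\hat f\not\equiv f$ exists. Repeating per class and invoking Proposition~\ref{prop:unique} yields the theorem. The principal technical difficulty is the rank-$m$ bookkeeping: on each analytic stratum one must produce $m$ linearly independent rows of $d\Phi$ among a mixture of first- and higher-order derivative constraints from the $m-n$ analytic queries, uniformly in the $n(d+1)$-dimensional candidate space. Covering $(\hat X,\hat{\bm\alpha})$-space by countably many open pieces on which a fixed choice of higher-order derivative dominates, and applying Sard/Fubini on each, is the cleanest way to carry this through; the precise accounting is what justifies the threshold $m>n(d+2)$ (versus $m>n(d+1)$ for fully analytic kernels).
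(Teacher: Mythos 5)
Your proposal follows essentially the same route as the paper: reduce to $C=1$ and invoke Proposition~\ref{prop:unique}, set up the evaluation map on $(\hat X,\hat{\bm\alpha},Z)$, stratify by the order of vanishing of derivatives (using analyticity to guarantee a nonvanishing higher-order partial wherever $\hat f\not\equiv f$), apply the Submersion Level Set Theorem with the dimension count $n(d+1)+md-m<md$, and reserve $n$ extra queries for the non-analytic locus, giving $m>n(d+2)$. The reduction, the stratification, the rank-$m$ bookkeeping via block structure of the $Z$-derivative, and the final countable-cover/null-set conclusion all match the paper's argument.

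The one step whose justification as written would not go through is the claim that ``by a Fubini argument, the set of $Z\in\Xcal^m$ containing more than $n$ singular queries (for any candidate) is also measure zero.'' Fubini over $Z$ for a \emph{fixed} candidate already makes even a single singular query a null event, so it cannot be what produces the threshold $n$; and since the candidate is existentially quantified over an $n(d+1)$-dimensional family, a fixed-candidate Fubini argument does not cover the union. The correct justification — and the reason the threshold is exactly $n$ — is a dimension count on the singular locus itself: if $n'>n$ queries are violating, then the tuple $(\bz_j)_{j\in I}$ lies in the image of a $C^1$ map $\Gamma_{S,J}:\Xcal^{n}\to\Xcal^{n'}$ (the composition of the $\Gamma_s$ with a choice of which of the $n$ \emph{variable} reconstruction points each query attaches to; the $N$ true points are fixed and contribute no dimensions), and a $C^1$ image of an $nd$-dimensional manifold in an $n'd$-dimensional one with $n'>n$ is null. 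You clearly intuited the right threshold, but you should replace the Fubini appeal with this image-dimension argument; with that repair the proof is complete and coincides with the paper's.
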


\begin{remark}\label{remark:manifolds}
The theorem is stated with simple yet commonly satisfied conditions, namely a strictly p.d., analytic kernel on $\reals ^d$, allowing isolated non-analytic points. Nonetheless, we provide some generalizations.
\begin{enumerate}
    \item \textbf{Generalization to submanifolds}: In the statement of \thmref{thm:general_uniqueness_full}, we require $\Xcal$ to be an open subset of $\mathbb{R}^d$. However, the main part of the proof, in which we show that the set of $m$-tuples $Z=(z_1,...,z_m)$ for which uniqueness does not hold is contained in a submanifold of $\Xcal ^m$ of dimension $ < md$, relies only on the fact that $\Xcal$ is a smooth manifold (for the Submersion Level Set Theorem). To further conclude that the probability of sampling such a $Z$ is 0, we require only a well-defined distribution $\Dcal$ on $\Xcal$ which gives 0 probability to submanifolds of dimension strictly smaller than $d$. Therefore, the same proof strategy would work for any submanifold $\Xcal \subseteq \reals^{d'}$ of dimension $d$ with such a properly defined distribution (for instance, a Riemannian manifold with $\Dcal$ defined by a density, integrated with regard to the canonical volume measure). A common example would be distributions over the sphere $\mathbb{S}^d$. 
    Note that in general, analytic functions are only properly defined on real-analytic manifolds; see the next point for a generalization to $C^\infty$ kernels.
    \item \textbf{Generalization to $C^\infty$ kernels}: The requirement on $\kr$ may be weakened so that $\kr \in C^\infty(\Xcal)$, under the condition that any linear combination $f(z)=\sum_{i=1}^n\alpha_i\kr(z,x_i)$ which is not identically zero, does not have partial derivatives of every order vanishing at a point $z$ where $f(z)=0$ (this is a key property of analytic functions which is useful in the proof of \thmref{thm:general_uniqueness}). In this case, if every such $f(\bz)$ has isolated points $\bz$ where $f$ is not $C^\infty$ or has vanishing derivatives of every order, the number of queries required is $m > n(d+1)+\frac{n(d+1)}{d}$.
    \item \textbf{Generalization to larger sets of non-analytic points}: One may have a kernel $\kr$ so that for every fixed $\bx$, $\kr (\bx, \bz)$ has non-analytic points $\bz$ in a submanifold $\mathcal M_{\bx} \subseteq \Xcal$ of dimension $d' < d$. This too can be accommodated in the theorem, as long as these non-analytic points are smoothly parameterized by $\bx$. The number of queries in this case would be $m > n(d+1) + \frac{nd}{d-d'}$. In the $C^\infty$ case, in which we consider for every linear combination $f(\bz)$, points $\bz$ from a $d'$-dimensional manifold where $f(\bz)$ has all partial derivatives vanishing, we would require $m > n(d+1) + \frac{n(d+1)}{d-d'}$. Notice the bounds for the isolated case are recovered with $d' = 0$. For an empty set (a kernel that is analytic everywhere) the proof holds with $m > n(d+1)$.
\end{enumerate}
    
\end{remark}

\subsubsection{Proof of \thmref{thm:general_uniqueness_full}}
\begin{proof}
First, we assume w.l.o.g $C = 1$. Indeed, any solution attaining 0 loss for \eqref{eq:loss_reconstruction} also attains 0 loss for each $c \in C$. Therefore, if the theorem holds for $C=1$ we can apply it separately for every $c \in C$ and obtain that for $m > n(d+2)$, it holds with probability 1 over $Z \sim \Dcal ^m$ that the coefficients $\hat \alpha_{i,c}$ and training points $\hat \bx_i$ reflect the true training data (note that the intersection of all $C$ events still holds w.p. 1). Since $\forall i \in [N]$ there exists $c \in [C]$ with $\alpha_{i,c} \neq 0$, every $\bx_i$ will appear in the solution set for the appropriate $c$, but since the $\hat \bx_i$ are shared between reconstruction terms, it will appear in all of them. Hence $\{\bx_1,...,\bx_N\} \subseteq \{\hat \bx_1,..., \hat \bx_n\}$. W.l.o.g we permute $1,...,n$ so that $\forall1  \leq i \leq N: \hat \bx _i = \bx_i$, and assume that $\hat \bx _1, ..., \hat \bx _n$ are distinct (if they are not, for each $c \in [C]$ we sum terms for identical $\hat \bx _i$, and complete the set $\{\hat \bx_1, ..., \hat \bx _n\}$ to $n$ distinct vectors with a choice of arbitrary $\hat \bx_i$ and coefficients equal to 0). Now, observing each $c \in [C]$ separately, the equality of the coefficients $\forall1  \leq i \leq N: \hat \alpha_{i,c} = \alpha_{i,c}$ and $\forall i > N: \hat \alpha _{i,c} = 0$ follows from the case $C=1$.

Assuming $C=1$, for ease of notation we denote $f = f_1$ and $\forall i \in [N]: \alpha_i = \alpha_{i,1}$. Since $\kr$ is strictly p.d., from \propref{prop:unique}, it suffices to show that for any minimizer $\hat{f}=\sum_{i=1}^{n}\hat{\alpha}_i \kr(\cdot, \hat \bx_i)$ with $\forall j \in [m]: \hat f(z_j)=f(z_j)$ it holds that $f \equiv \hat{f}$. Denote $V$ the family of training sets which define predictors non-equivalent to $f$:
$$V = \{(\hat X, \hat{\bm{\alpha}}) \in \Xcal^n \times \mathbb{R}^{n} \mid \exists \bz \in \Xcal:\sum_{i=1}^{n}\hat{\alpha}_i \kr(\bz, \hat \bx_i) \neq f(\bz)\}$$

$V$ is clearly an open set since, if for some $(\hat X, \hat{\bm{\alpha}})$ there exists $\bz \in \Xcal$ with $\sum_{i=1}^{n}\hat{\alpha}_i \kr(\bz, \hat \bx_i) \neq f(\bz)$, the same holds for a neighborhood of $(\hat X, \hat{\bm{\alpha}})$ from continuity of $\kr$.

Note that any $\hat{f}$ of the above form with $f \not \equiv \hat{f}$ is represented by some $(\hat X, \hat{\bm{\alpha}}) \in V$.

Define $g_f : V \times \Xcal \rightarrow \mathbb{R}$ as
\begin{align}\label{eq: g_def}
    g_f\left((\hat X, \hat{\bm{\alpha}}), \bz \right)=\sum_{i=1}^{n}\hat{\alpha}_i \kr(\bz, \hat \bx_i) - f(\bz) = \sum_{i=1}^{n}\hat{\alpha}_i \kr(\bz, \hat \bx_i) - \sum_{i=1}^{N}{\alpha}_i \kr(\bz, \bx_i)
\end{align}

Notice that $g_f((\hat X, \hat{\bm{\alpha}}), \bz)$  is in fact the evaluation function on a training set of size $\leq 2n$ defined by $(\hat X *X, \hat{\bm{\alpha}}*(-\bm{\alpha}))$, where $*$ signifies concatenation. Also, for any $(\hat X, \hat{ \bm{\alpha}})$, $g_f((\hat X, \hat{\bm{\alpha}}), \cdot) \not \equiv 0$ by definition of $V$. 

From here on we denote $\bm{v} = (\hat X, \hat{\bm{\alpha}}) \in V \subseteq \reals^{n(d+1)}$.

Denote $G:V \times \Xcal ^m\rightarrow \mathbb{R}^m$ the function $g_f$ evaluated on $m$ points:

$$G(\bm{v}, Z)=(g_f(\bm{v}, \bz_1),..., g_f(\bm{v}, \bz_m))$$
And $\pi : V \times \Xcal ^m$ the projection onto $Z$:
$$\pi(\bm{v}, Z)=Z$$
Denote
$$\mathcal{Z} = \{Z \in \Xcal ^m \mid \exists \bm{v} \in V: G(\bm{v}, Z) = 0 \}$$
$$ = \pi(G^{-1}(0))$$

The rest of the proof will be dedicated to showing that when $m$ is large enough, the set $\mathcal{Z}$ is a Lebesgue null set in $\Xcal^m$. Indeed, with this result, since  $\mathcal{D}$ is a distribution with density, it follows that $\mathcal{Z}$ has an integrated density of 0. It follows that with probability 1 over $Z \sim \mathcal{D}^m$, there exists no suitable $\hat f \not \equiv f$ with  $\forall j \in [m]: \hat{f}(\bz_j)=f(\bz_j)$, thereby finishing the proof.

First we show we can discount the non-analytic points.
We say $(\bm{v},\bz)$ is "violating" if $g_f(\bm{v}, \cdot)$ is not analytic at $\bz$. We prove that the set of $m$-tuples of queries $Z \in \mathcal{X}^m$, so that there exists a $\bm{v} \in V$ for which more than $n$ of the queries are violating, is null. This will allow us to assume that w.p. 1, $Z$ contains more than $n(d+1)$ non-violating queries for every possible $\bm{v} \in V$.
Denote
$$\mathcal{Z}_B = \{Z \in \Xcal ^m \mid \exists \bm{v} \in V, I\subseteq [m], |I|=n'>n: \forall j \in I, (\bm{v},\bz_{j}) \text{ is violating }\}$$
We show $\mathcal{Z}_B$ is null.

Since $\kr$ is almost analytic, there exist suitable functions $\{\Gamma_s:\Xcal \rightarrow \Xcal\}_{s \in \mathbb{N}}$ so that $\kr(\bx, \cdot)$ is analytic for every $\bz \notin \bigcup_{s=1}^\infty \{\Gamma_s(\bx)\}$. Since analyticity is preserved under linear combinations, $g_f((\hat X, \hat{\bm{\alpha}}), \cdot)$ is analytic at every $\bz \notin \{\Gamma_s(\bx) \mid s \in \mathbb{N}, \bx \in \{\bx_1,...,\bx_N, \hat \bx_1, ..., \hat \bx _n\}\}$.

For any $I \subseteq [m], |I|=n'>n$, denote $\mathcal{Z}_{B,I}$ the set of $Z \in \Xcal^m$ where $\exists \bm{v} \in V: \forall j \in I: (\bm{v},\bz_{j})$ is violating. Since $\mathcal{Z_B} = \bigcup_{I \subseteq [m], |I|>n}\mathcal{Z}_{B, I}$, it suffices to fix $I$ and show that $\mathcal{Z}_{B, I}$ is null. 

For every $(\bm{v}, Z)$ with violating indices at $I$, it holds that each query $\bz_j, {j \in I}$ is in the image of some $\Gamma_s$ for some $\bx \in \{\bx_1,...,\bx_N, \hat \bx_1, ..., \hat \bx _n\}$. Therefore, the $n'$-tuple $(z_j)_{j \in I}$ is in the image of the following function for some  $S=(s_1,...,s_{n'}) \in \mathbb{N}^{n'}, J=(j_1,...,j_{n'}) \in [2n]^{n'}$:
$$ \Gamma_{S, J}(\tilde \bx_1, ..., \tilde \bx_n, ..., \tilde \bx_{2n}) = (\Gamma_{s_1}(\tilde \bx_{j_1}), ..., \Gamma_{s_{n'}}(\tilde \bx_{j_{n'}})):\Xcal^{2n} \rightarrow \Xcal^{n'}$$
However, notice that in our case $\bx_1, ..., \bx_N$ are fixed, while only $\hat \bx_1, ..., \hat \bx_n$ vary. Therefore it holds that (up to permutation of indices) 
$$\mathcal{Z}_{B, I} \subseteq \bigcup_{S \in \mathbb{N}^{n'}, J \in [2n]^{n'}}\Xcal^{m-n'} \times \Gamma_{S,J}(\Xcal^n \times \{\hat \bx_1, ..., \hat \bx_n\})$$

Fixing $J$ and $S$, it suffices to show that $\Gamma_{S,J}(\Xcal^n \times \{\hat \bx_1, ..., \hat \bx_n\})$ is a null set in $\Xcal ^{n'}$. This holds simply because ${\Gamma}_{S,J}$ with $n$ fixed input coordinates is a $C^1$ function from $\mathcal{X}^n$, which is a smooth manifold of dimension $nd$, to a smooth manifold of larger dimension $n'd$.

From here on, for simplicity of notation we assume w.l.o.g that every $(\bm v, Z)$ for $\bm v \in V, Z \in \mathcal{Z}$ contains only non-violating queries; this is justified since we will only need the fact $m > n(d+1)$, and we have shown that, up to a null set of tuples $Z \in \Xcal^m$, each $Z$ has, for any $\bm{v} \in V$, more than $n(d+1)$ queries which are non violating. The reduction to the non-violating case can be formally achieved through introducing into the parameter $\bm{r}$ (see below) also the indices $j$ for which $(\bm{v}, \bz_j)$ are violating. Then, around every $(\bm{v}, Z)$ we select the component functions $g_f(\bm v, \bz_j)$, corresponding to non-violating $\bz_j$. As we shall see later in the proof, this selection can only increase the size of the solution set, hence if the increased solution set is null, so is the original.

It suffices to show that for every point $(\bm{v}, Z) \in G^{-1}(0)$, there exists a neighborhood $A$ of $(\bm{v}, Z)$ for which $\pi(A \cap G^{-1}(0))$ is a null set in $\Xcal ^m$. Indeed, taking such neighborhoods for every $(\bm{v}, Z) \in G^{-1}(0)$, we can take a countable covering subset of these neighborhoods, from second countability of $V \times \Xcal^{m}$. We obtain that $\pi(G^{-1}(0))$ is the countable union of null sets, hence a null set.

To show this we use the Submersion Level Set Theorem (see for example \citet{lee2012manifolds}). 

We first separate $\mathcal{Z}$ into subsets by vanishing of derivatives of the $\bz_j$.
For $\bm{v} \in V, \bz\in \Xcal$ with $g_f(\bm{v}, \bz)=0$, denote $r(\bm{v}, \bz)$ the integer $r$ for which all partial derivatives of order $\leq r$ of $g_f(\bm{v}, \cdot)$ vanish at $\bz$, and at least one partial derivative of order $r + 1$ does not vanish. It holds that $r$ is well-defined since $g_f(\bm{v}, \cdot)$ is analytic at $\bz$ and $g_f(\bm{v}, \cdot) \not \equiv 0$. For every $\bm{r} = (r_1,...,r_m)\in (\mathbb N \cup \{0\})^{m}$ denote
$$\mathcal{Z}_{\bm{r}} = \{Z \in \Xcal^m \mid \exists \bm{v} \in V : G(\bm{v}, Z) = 0 \text{ and } \forall j\in [m]: r(\bm{v}, \bz_j)=r_j\}$$
It holds that 
$$\mathcal{Z} = \bigcup_{\bm{r} \in (\mathbb N \cup \{0\})^{m}}\mathcal{Z}_{\bm{r}}$$
And this is a countable union.

Fix $\bm{r}$. It suffices to show that $\mathcal{Z}_{\bm{r}}$ is null in $\Xcal ^m$.
Denote by $G_{\bm{r}} : V \times \Xcal ^m \rightarrow \mathbb R ^{m+d(\bm{r})}$ the "augmented" function obtained by appending to $G$ the evaluation of the $r_j$-order partial derivatives for every $j$ (concretely, $d(\bm{r})=\sum_{j=1}^m d^{r_j}$).

It holds that
$$\mathcal{Z}_{\bm{r}} \subseteq \pi (G_{\bm{r}}^{-1}(0))$$
So, we must show for $(\bm{v}, Z) \in G_{\bm{r}}^{-1}(0)$ that there exists a neighborhood $A$ of $(\bm{v}, Z)$ so that $\pi (A \cap G_{\bm{r}}^{-1}(0))$ is null in $\Xcal ^m$. 

We claim it suffices to find a $m \times m$ submatrix of $\partial G_{\bm{r}} / \partial  Z$ which has rank $m$. Indeed, if this holds we can define a function $\tilde G_{\bm{r}}(\bm{v}, Z): V \times \Xcal ^m \rightarrow \mathbb{R}^{m}$ by selecting the component functions of $G_{\bm{r}}$ corresponding to the rows of the submatrix. Note this amounts to reducing the constraints on $(\bm{v}, Z)$, namely
$$G_{\bm{r}}^{-1}(0) \subseteq \tilde G_{\bm{r}}^{-1}(0)$$
Now $\tilde G_{\bm{r}}$ satisfies the conditions of the Submersion Level Set Theorem, implying that there exists a neighborhood $A$ of $(\bm{v}, Z)$ so that $A \cap\tilde G_{\bm{r}}^{-1}(0)$ is a $n(d+1) + md - m < md$ dimensional manifold, hence $\pi(A \cap \tilde G_{\bm{r}}^{-1}(0))$ is null in $\Xcal ^m$, implying the same for $\pi(A \cap G_{\bm{r}}^{-1}(0))$.
Finally, the choice of the $m \times m$ submatrix of rank $m$ arises simply from the fact that $\partial G_{\bm{r}} / \partial Z$ has block matrix structure ($\partial G_{\bm{r}} / \partial \bz_j \neq 0$ only at row indices related to $\bz_j$), and for any $j \in [m]$, there exists at least one $r_j$-order partial derivative, hence one row index, which has a non-zero gradient with regard to $\bz_j$, from the assumption $r(\bm{v}, \bz_j) = r_j$
\end{proof}

\section{Relevant Kernels}\label{app:relevant_kernels}
The following is a list of kernels that are relevant to this paper:
\begin{enumerate}
    \item Laplace kernel: $\kr\left(\bx, \bx'\right) := \exp\left(-\gamma \norm{\bx - \bx'}_2 \right)$ for some $\gamma > 0$.
    \item Gaussian (RBF) kernel: $\kr\left(\bx, \bx'\right) := \exp\left(-\gamma \norm{\bx - \bx'}_2^2 \right)$ for some $\gamma > 0$.
    \item Polynomial kernel: $\kr\left(\bx, \bx'\right) := \left(c_0 + \gamma\left\langle \bx, \bx' \right\rangle \right)^\ell$ for some $c_0, \gamma > 0$ and $\ell\in\N$. 
    
    \item Neural Tangent Kernel (NTK): when referring to the NTK, it will always be with respect to a fully connected ReLU network, for which there is a known analytic formula  \citep{jacot2018neural, lee2019wide, bietti2020deep}. First, let
    \begin{align*}
        \kappa_0(u):=\frac{1}{\pi}(\pi - \arccos(u)), \qquad \kappa_1(u) := \frac{1}{\pi} \left(u\left(\pi - \arccos(u)\right) + \sqrt{1 - u^2}\right).
    \end{align*}
    To define the NTK, we first define the $L$ layer Gaussian Process Kernel (GPK; also known as NNGP) on $\Sphere^{d-1}$ as
    \begin{align*}
        \kr_{\text{GPK}}^{(L)}(\x, \x') := \kappa_1\left(\kr_{\text{GPK}}^{(L-1)}(\x, \x')\right), \qquad \kr_{\text{GPK}}^{(0)}(\x, \x'):= \x^\top \x',
    \end{align*}
    so that the $L$ layer NTK on $\Sphere^{d-1}$ is
    \begin{align*}
        \kr_{\text{NTK}}^{(L)}(\bx, \bx') := \kr_{\text{NTK}}^{(L-1)}(\x, \x')\kappa_0\left(\kr_{\text{GPK}}^{(L-1)}(\x, \x')\right) + \kr_{\text{GPK}}^{(L)}(\x, \x'),
        \qquad \kr_{\text{NTK}}^{(0)}:= \kr_{\text{GPK}}^{(0)}(\x, \x').
    \end{align*}
    Now for a general $\bx,\bx'\in\R^d$, using the fact that for a ReLU activation, the kernel is homogeneous \citep{bietti2019inductive}, the NTK is given by 
    \begin{align*}
        \kr_{\text{NTK}}^{(L)}(\bx, \bx') := \norm{\bx}\norm{\bx'}\kr_{\text{NTK}}\left(\frac{\bx}{\norm{\bx}}, \frac{\bx'}{\norm{\bx'}}\right).
    \end{align*}
\end{enumerate}

\section{Further Results}
\subsection{Relation to Past Works}\label{app: past_works}
The following may serve as additional baselines for recent methods that work in a different setting of neural networks.

\textbf{\citep{haim2022reconstructing} / \citep{buzaglo2024deconstructing} NN} -  An attack proposed by \citet{haim2022reconstructing} and later extended by \citet{buzaglo2024deconstructing} on neural networks trained with cross-entropy loss in a way that ensures convergence to KKT points. Their attack utilizes the network weights. It further initializes twice the number of intended reconstruction candidates ($n=1000$) since this improved their results.

\textbf{\citep{loo2023understanding} NN} - An extension by \citet{loo2023understanding} of the KKT attack to neural networks trained near the lazy regime. Their attack uses the weights of the network both at initialization and at the end of training. We compare our results to their attack on two different neural networks, one with a width $W=1024$ and the other with a width $W=4096$. This method used $n=1000$ reconstruction candidates as well.  

\begin{table*}[h!]
\caption{Comparison of different methods on CIFAR-10, $N=500$. Our reconstructions here are for kernel regression. The best result in each column is in bold, and the second best is underlined. }
\scriptsize
\centering
\renewcommand{\arraystretch}{1.2}
\setlength{\tabcolsep}{6pt} % Adjusts column spacing
\begin{tabular}{l | c c | c c c | c c c | c}
\toprule
\multirow{2}{*}{\textbf{Method}} & \multicolumn{2}{c|}{\% of Dataset Reconstructed $\uparrow$} & \multicolumn{3}{c|}{DSSIM $\downarrow$} & \multicolumn{3}{c|}{\textbf{$L_2$} $\downarrow$} & \multirow{2}{*}{\textbf{Black-Box}} \\
& Total & DSSIM $< 0.3$ & 25\% & 50\% & 75\% & 25\% & 50\% & 75\% & \\
\midrule
\citep{haim2022reconstructing}/\citep{buzaglo2024deconstructing} NN & 2.2\% & 2.0\% & 0.294 & 0.334 & 0.363 & 9.707 & 11.890 & 13.816  & \faTimes \\
\citep{loo2023understanding} NN-W=1024 & 23.2\% & 0.4\% & 0.399 & 0.427 & 0.446 & 15.371 & 16.514 & 17.649 & \faTimes \\
\citep{loo2023understanding} NN-W=4096 & \underline{95.2\%} & \underline{94.4\%} & 0.116 & 0.162 & \underline{0.203} & 4.027 & 4.664 & \underline{5.429} & \faTimes \\
\citep{loo2023understanding} RBF & 46.2\% & 42.8\% & 0.209 & 0.311 & 0.369 & 4.484 & 7.978 & 10.738 & \faTimes \\
\midrule
Ours RBF & 67.8\% & 62.2\% & 0.12 & 0.232 & 0.351 & 2.145 & 4.091 & 10.746 & \faCheck \\
Ours Laplace & 81.2\% & 77\% & \underline{0.079} & \underline{0.154} & 0.258 & \underline{1.621} & \underline{2.898} & 6.028 & \faCheck \\
Ours NTK & 23.2\% & 4.6\% & 0.343 & 0.379 & 0.405 & 12.870 & 14.850 & 16.350 & \faCheck \\
Ours Cubic Polynomial & 26.0\% & 24.2\% & 0.286 & 0.329 & 0.359 & 7.735 & 10.326 & 12.384 & \faCheck \\
Ours Laplace - $n=700$ & \textbf{96.6\%} & \textbf{96.6\%} & \textbf{0.038} & \textbf{0.069} & \textbf{0.114} & \textbf{1.014} & \textbf{1.652} & \textbf{2.544} & \faCheck \\
\bottomrule
\end{tabular}
\label{tab:method_comparison_app}
\end{table*}

\subsection{Plots}

\begin{figure}[ht!]
    \centering
    \includegraphics[width=0.5\textwidth]{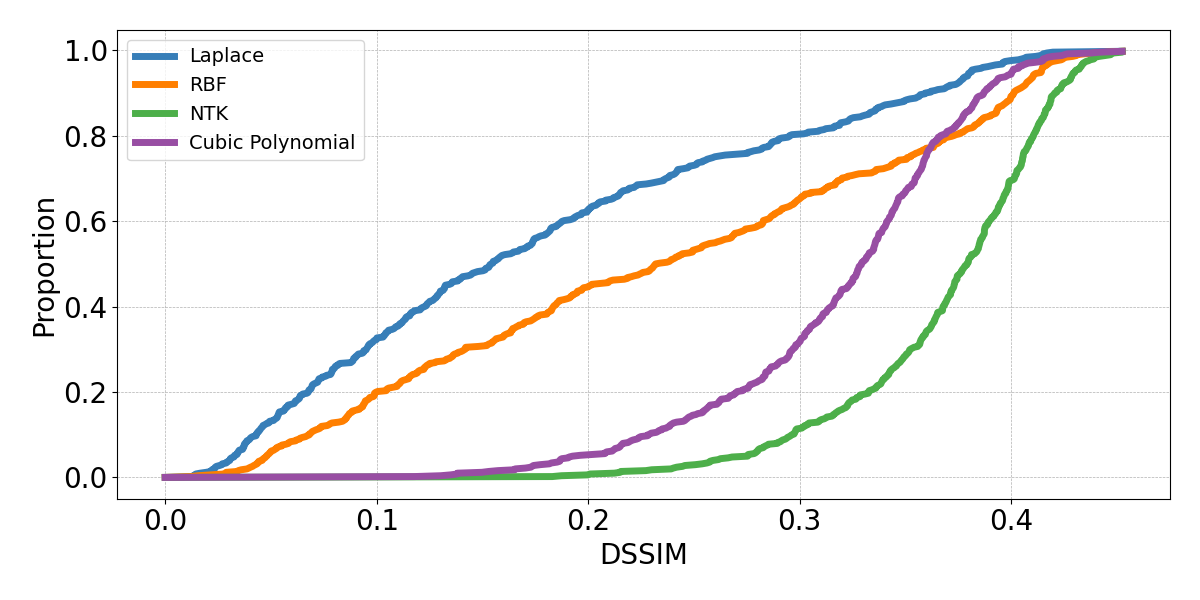}
    \caption{Commulative graph comparing the quality of training data reconstructions for different kernels. The $x$-axis denotes DSSIM, and $y$-axis the proportion of reconstructions whose DSSIM is at most that of the $x$-axis.}
    \label{fig:ecdf}
\end{figure}

\begin{figure}[tb!]
    \centering
    \includegraphics[width=0.48\textwidth]{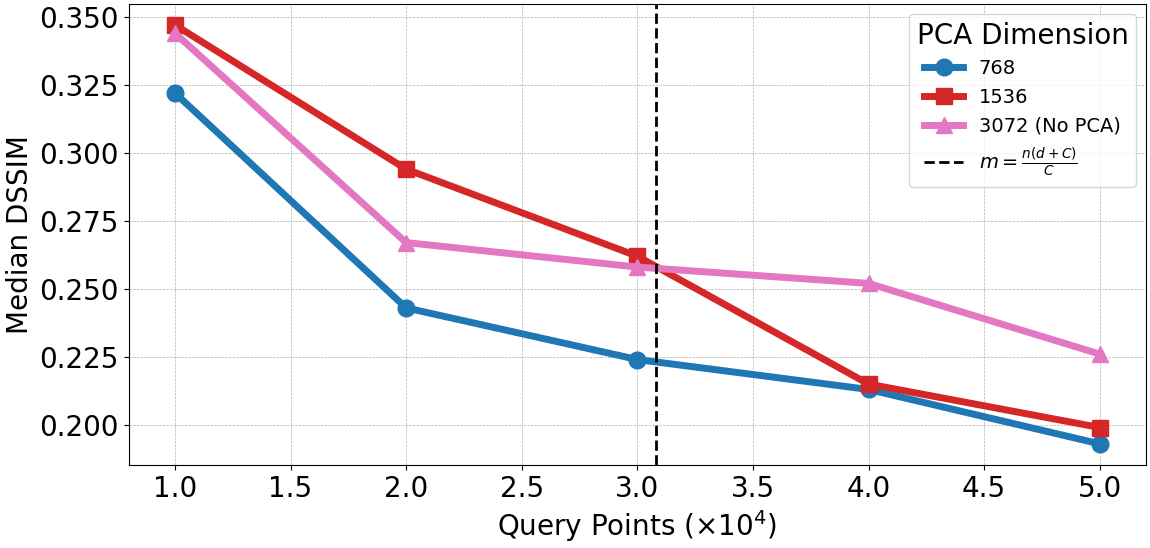}
    \caption{Reconstruction quality with dimensionality reduction with PCA. Models are trained on CIFAR10. Each point represents the quality, measured in median DSSIM, obtained in one run with a different number of query points. Unlike in the rest of the paper, in this experiment, we use $n=100$ training images.}
    \label{fig:pca_comparison}
\end{figure}

\begin{figure}[tb!]
    \centering
    \includegraphics[width=0.44\textwidth]{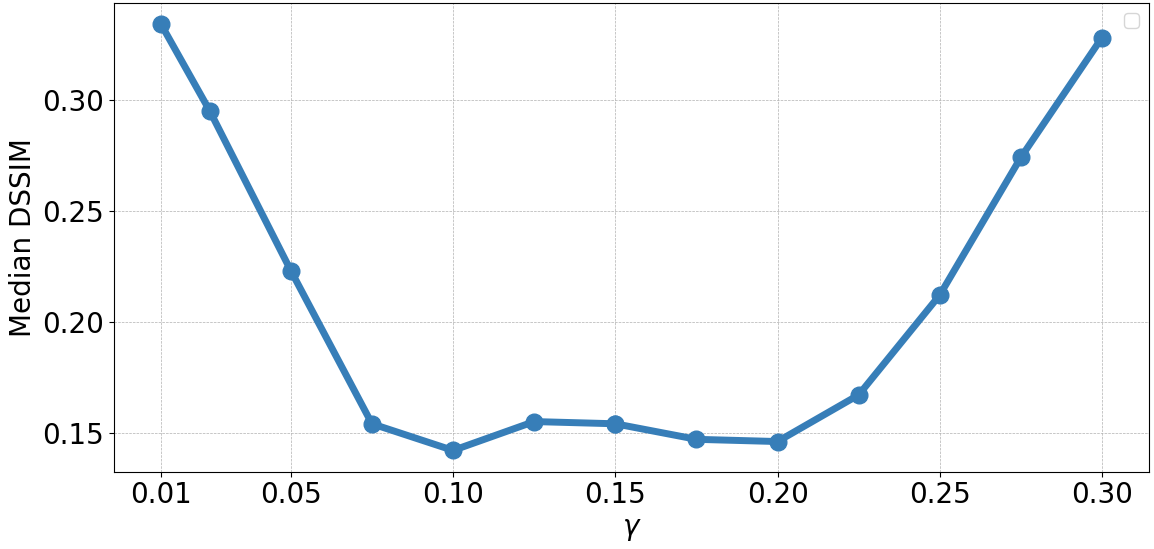}    
    \caption{Reconstruction quality, measured in median DSSIM, with the Laplace kernel on CIFAR10 for different choices of $\gamma$.}
    \label{fig:laplace_gamma_comparison}
\end{figure}

\raggedbottom

\section{Full Reconstruction Results}
\subsection{Kernel Regression}

\begin{figure}[H]
    \centering
    \includegraphics[width=1\textwidth]{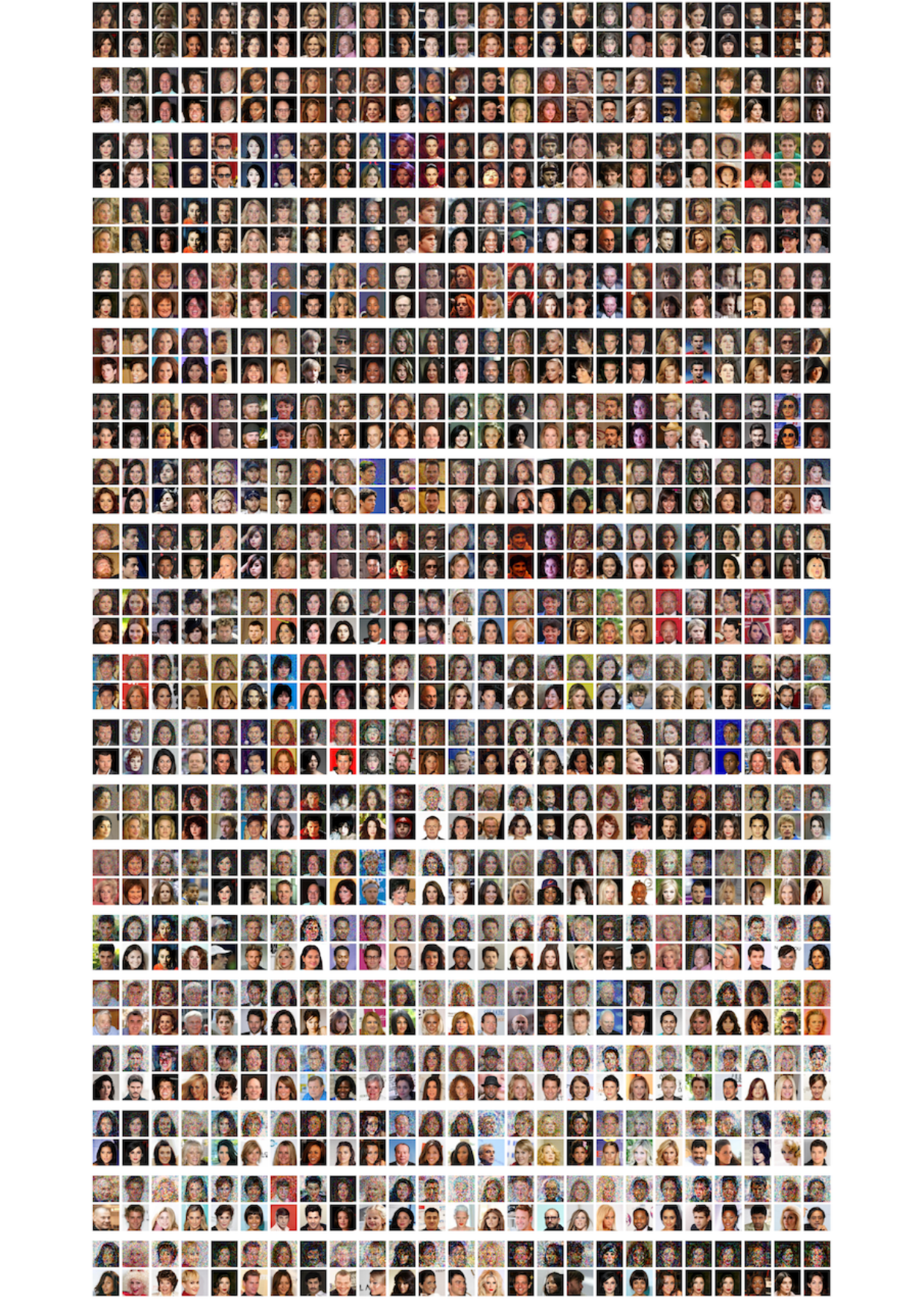}
    \caption{Reconstructions from an RBF kernel trained using kernel regression on celebA. }
    \label{fig:krr_rbf_celebA}
\end{figure}

\begin{figure}[H]
    \centering
    \includegraphics[width=1\textwidth]{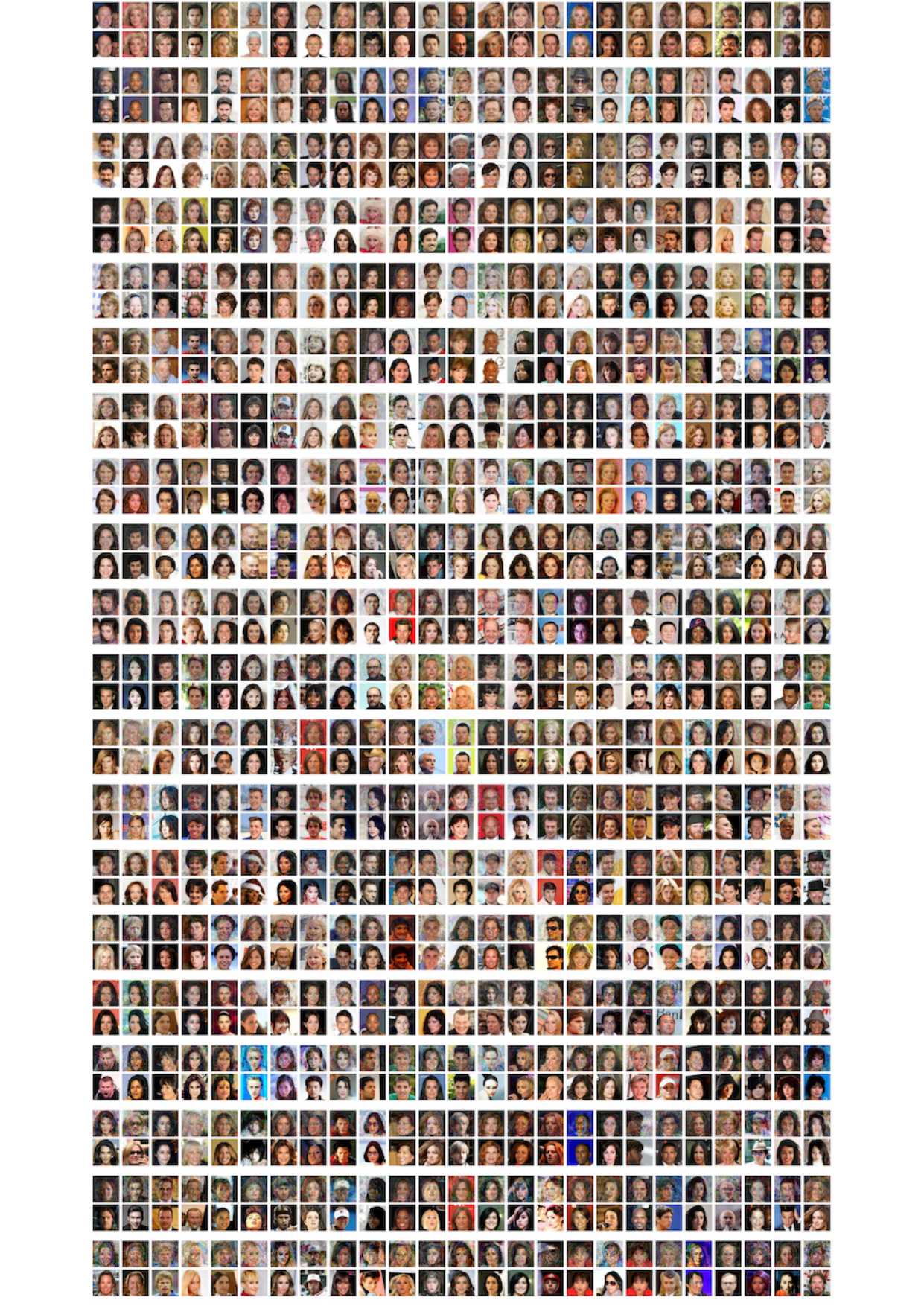}
    \caption{Reconstructions from a Laplace kernel trained using kernel regression on celebA. }
    \label{fig:krr_laplace_celebA}
\end{figure}

\begin{figure}[H]
    \centering
    \includegraphics[width=1\textwidth]{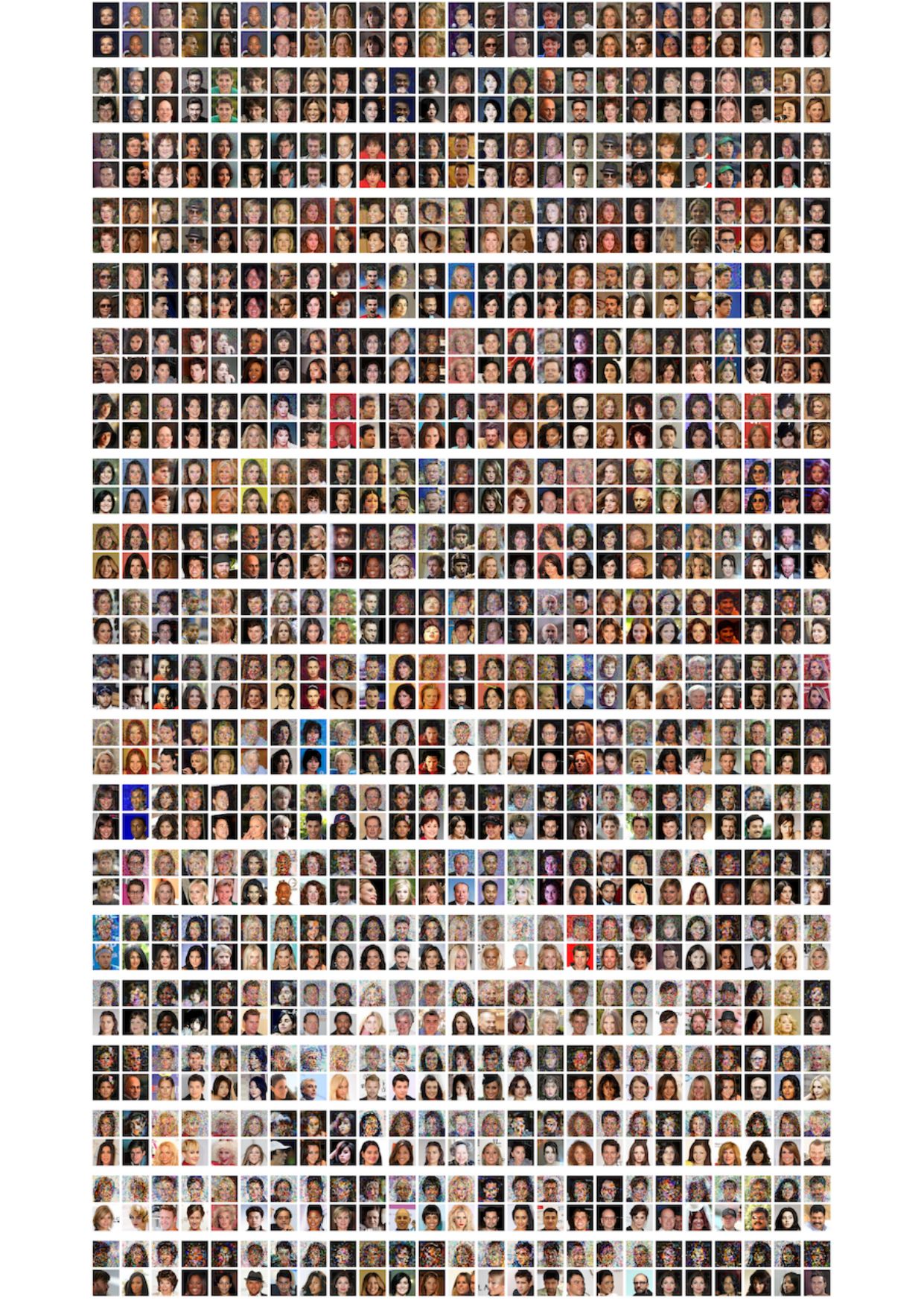}
    \caption{Reconstructions using a VAE from an RBF kernel trained using kernel regression on celebA. }
    \label{fig:krr_rbf_celeba_vae}
\end{figure}

\begin{figure}[H]
    \centering
    \includegraphics[width=1\textwidth]{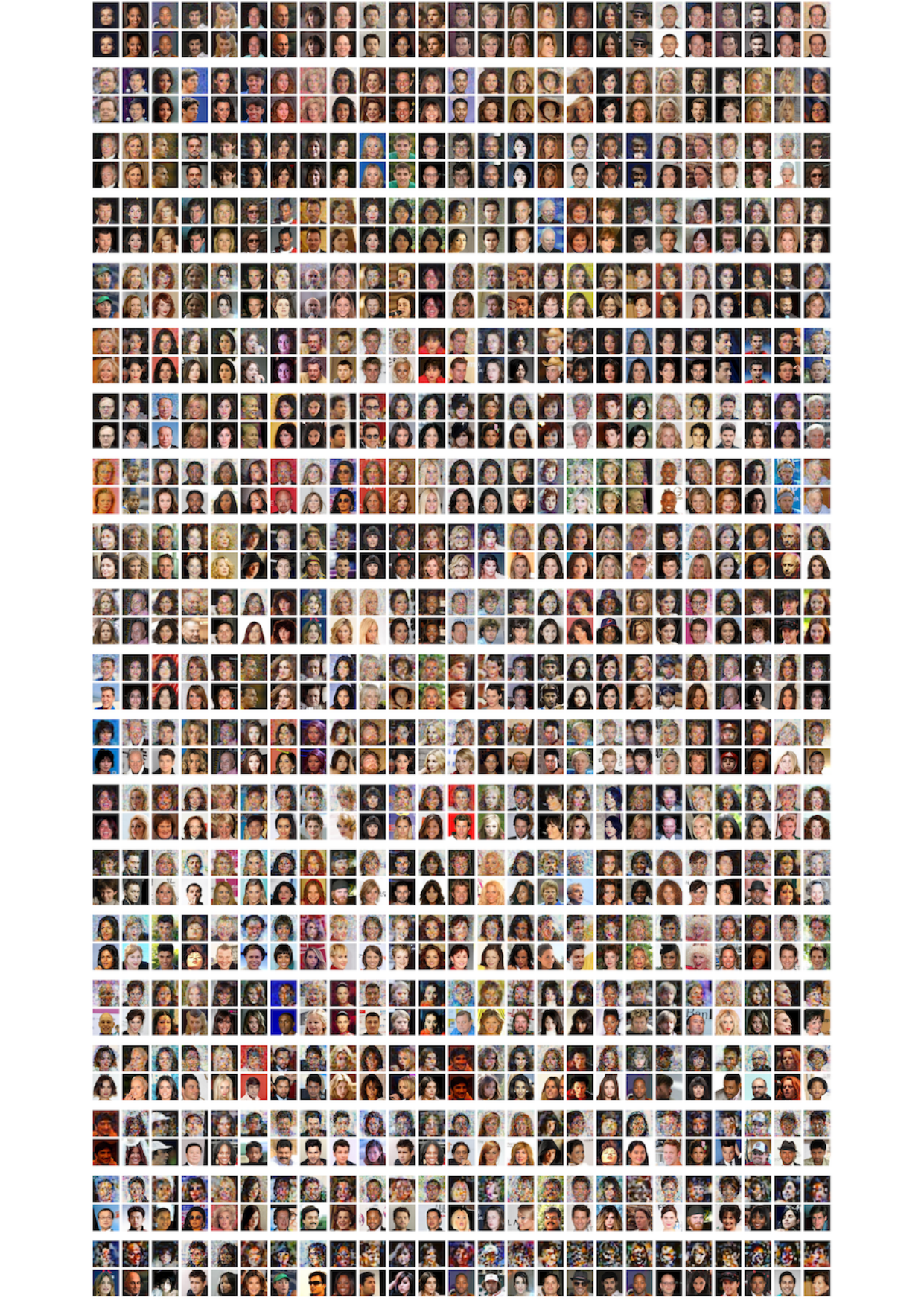}
    \caption{Reconstructions using a VAE from a Laplace kernel trained using kernel regression on celebA. }
    \label{fig:krr_laplace_celeba_vae}
\end{figure}

\begin{figure}[H]
    \centering
    \includegraphics[width=1\textwidth]{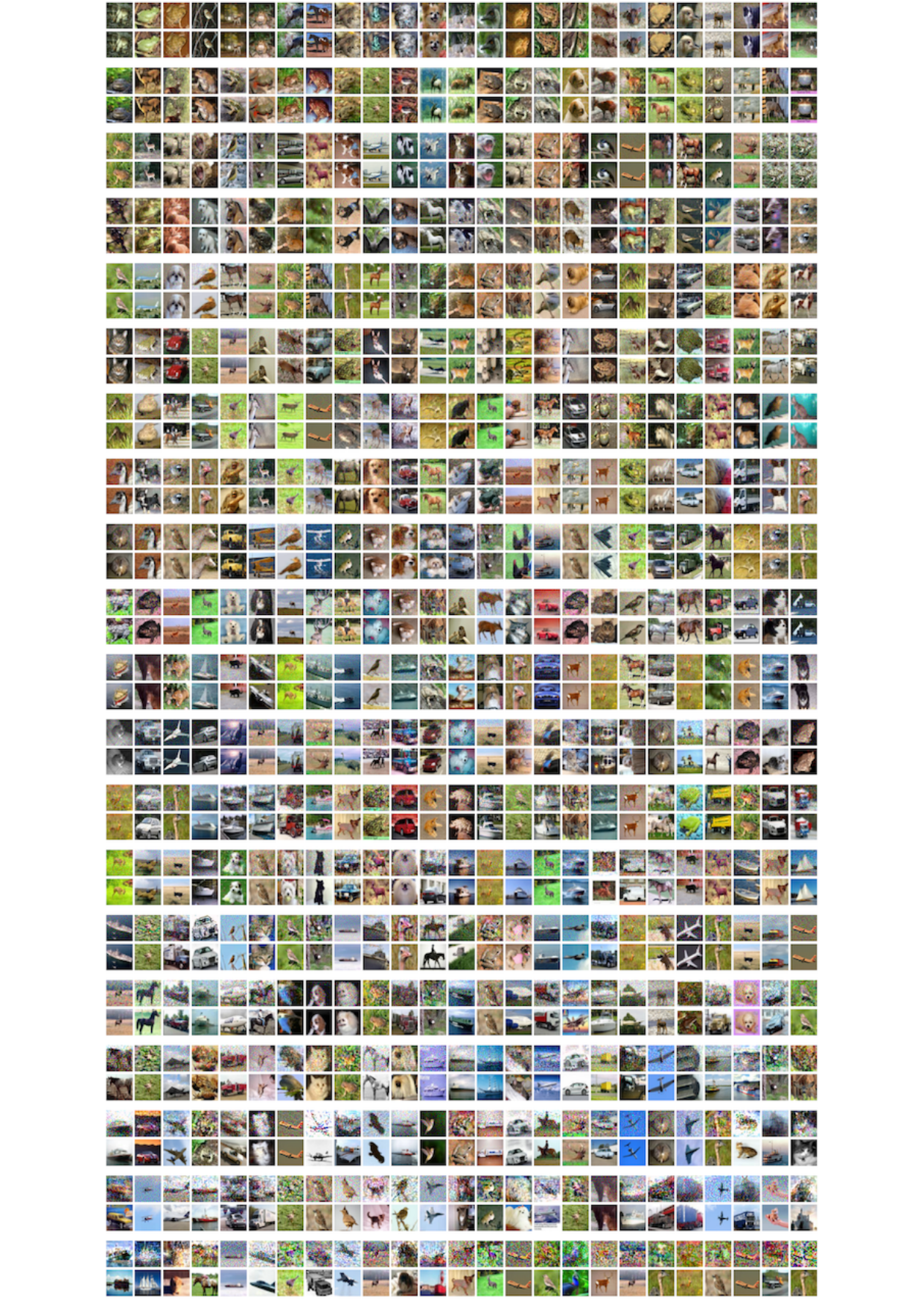}
    \caption{Reconstructions from an RBF kernel trained using kernel regression on CIFAR10. }
    \label{fig:krr_rbf_cifar10}
\end{figure}

\begin{figure}[H]
    \centering
    \includegraphics[width=1\textwidth]{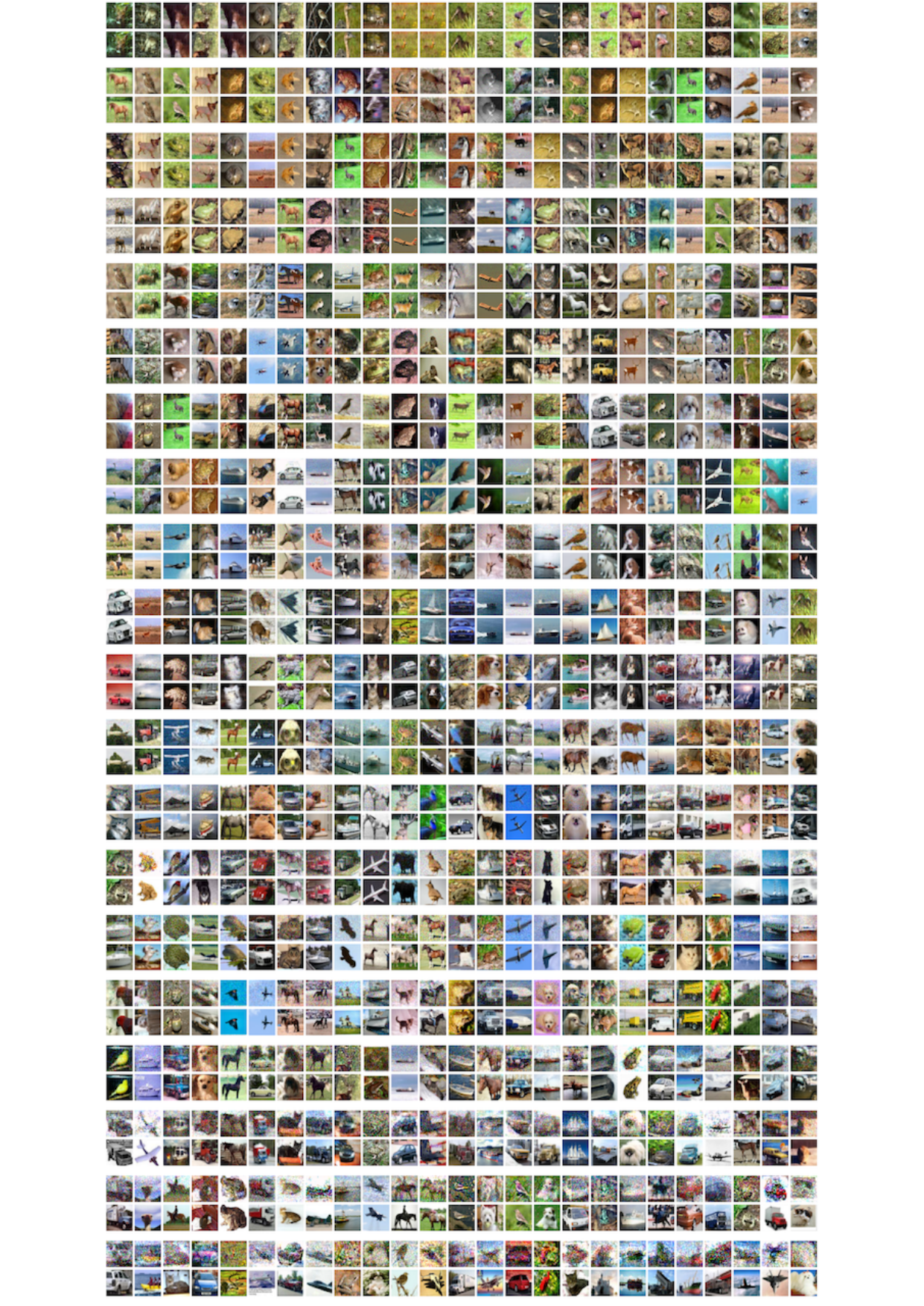}
    \caption{Reconstructions from a Laplace kernel trained using kernel regression on CIFAR10. }
    \label{fig:krr_laplace_cifar10}
\end{figure}

\begin{figure}[H]
    \centering
    \includegraphics[width=1\textwidth]{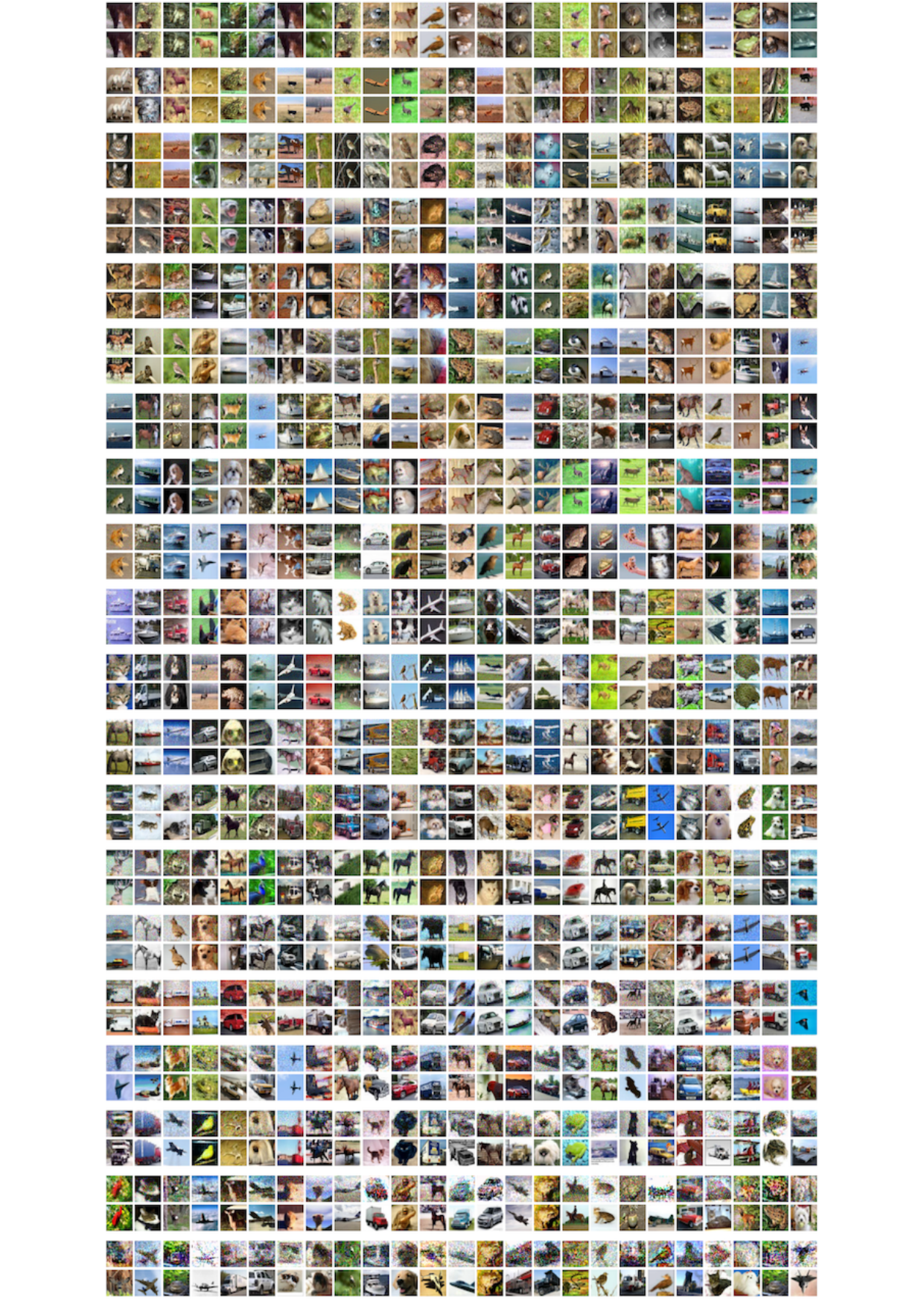}
    \caption{Reconstructions from a Laplace kernel trained using kernel regression with regularization $\lambda=10^{-3}$ on CIFAR10.}
    \label{fig:krr_laplace_cifar10_reg1e-3}
\end{figure}

\begin{figure}[H]
    \centering
    \includegraphics[width=1\textwidth]{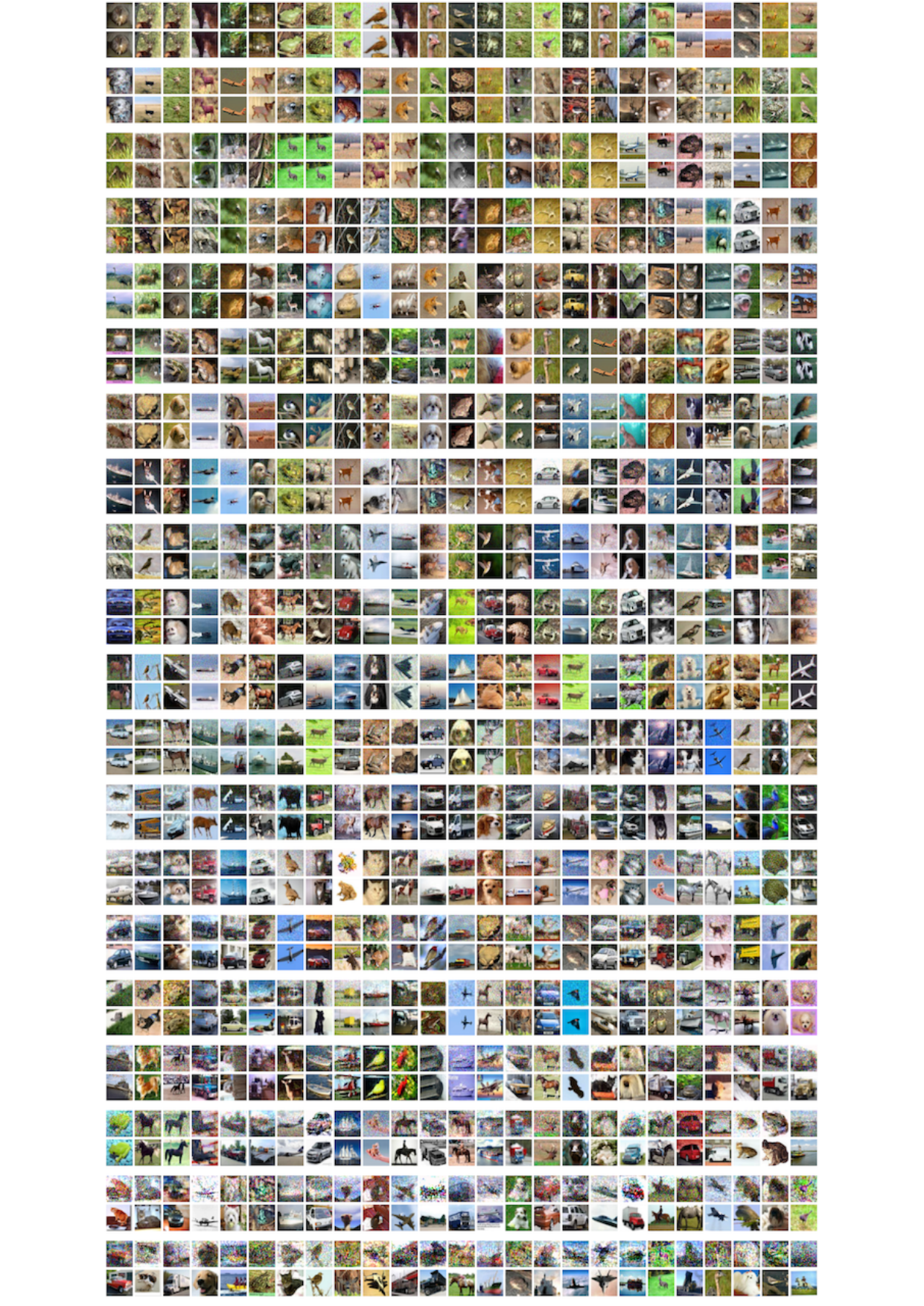}
    \caption{Reconstructions from a Laplace kernel trained using kernel regression with regularization $\lambda=10^{-5}$ on CIFAR10.}
    \label{fig:krr_laplace_cifar10_reg1e-5}
\end{figure}

\begin{figure}[H]
    \centering
    \includegraphics[width=1\textwidth]{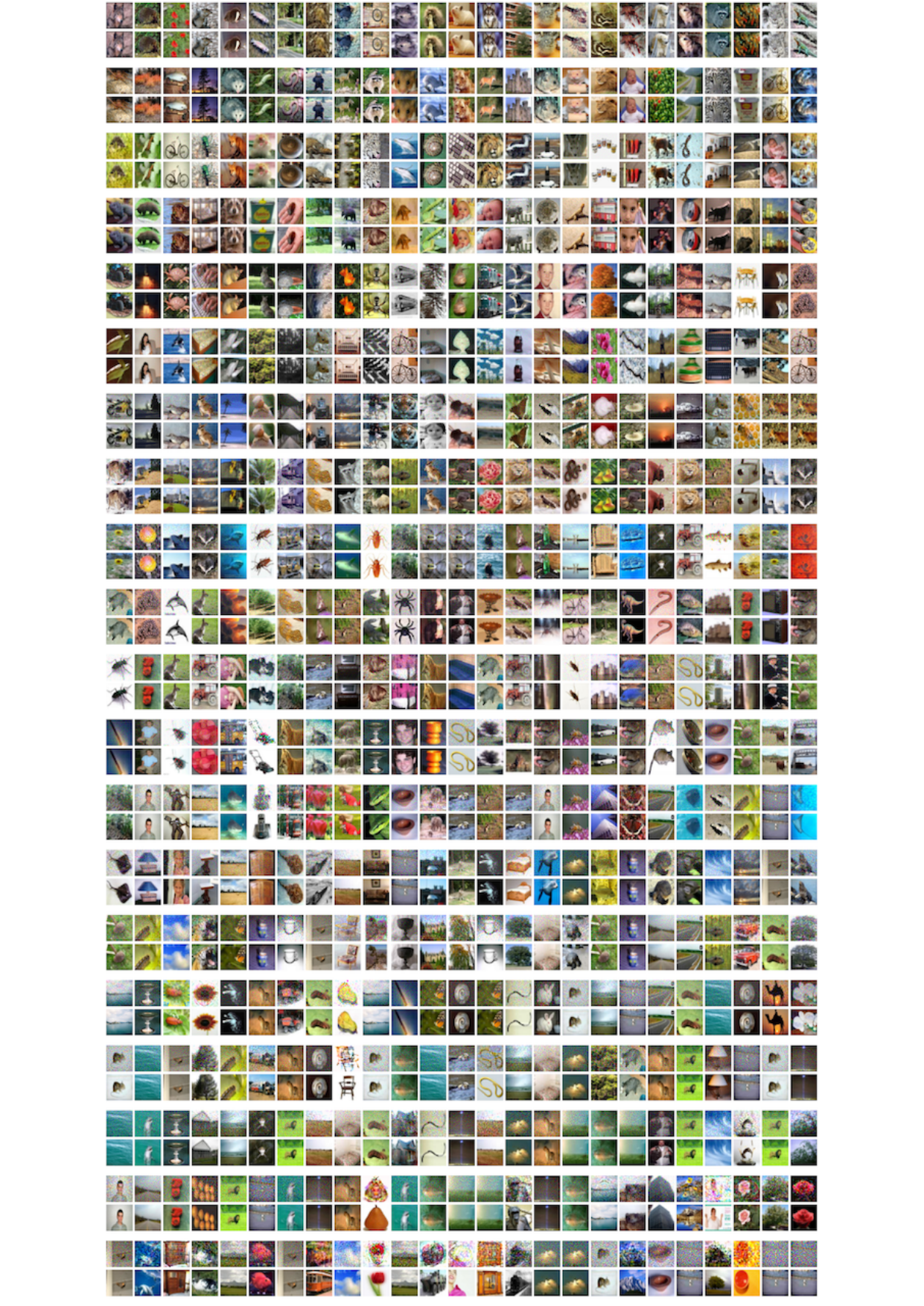}
    \caption{Reconstructions from an RBF kernel trained using kernel regression on CIFAR100.}
    \label{fig:krr_rbf_cifar100}
\end{figure}

\begin{figure}[H]
    \centering
    \includegraphics[width=1\textwidth]{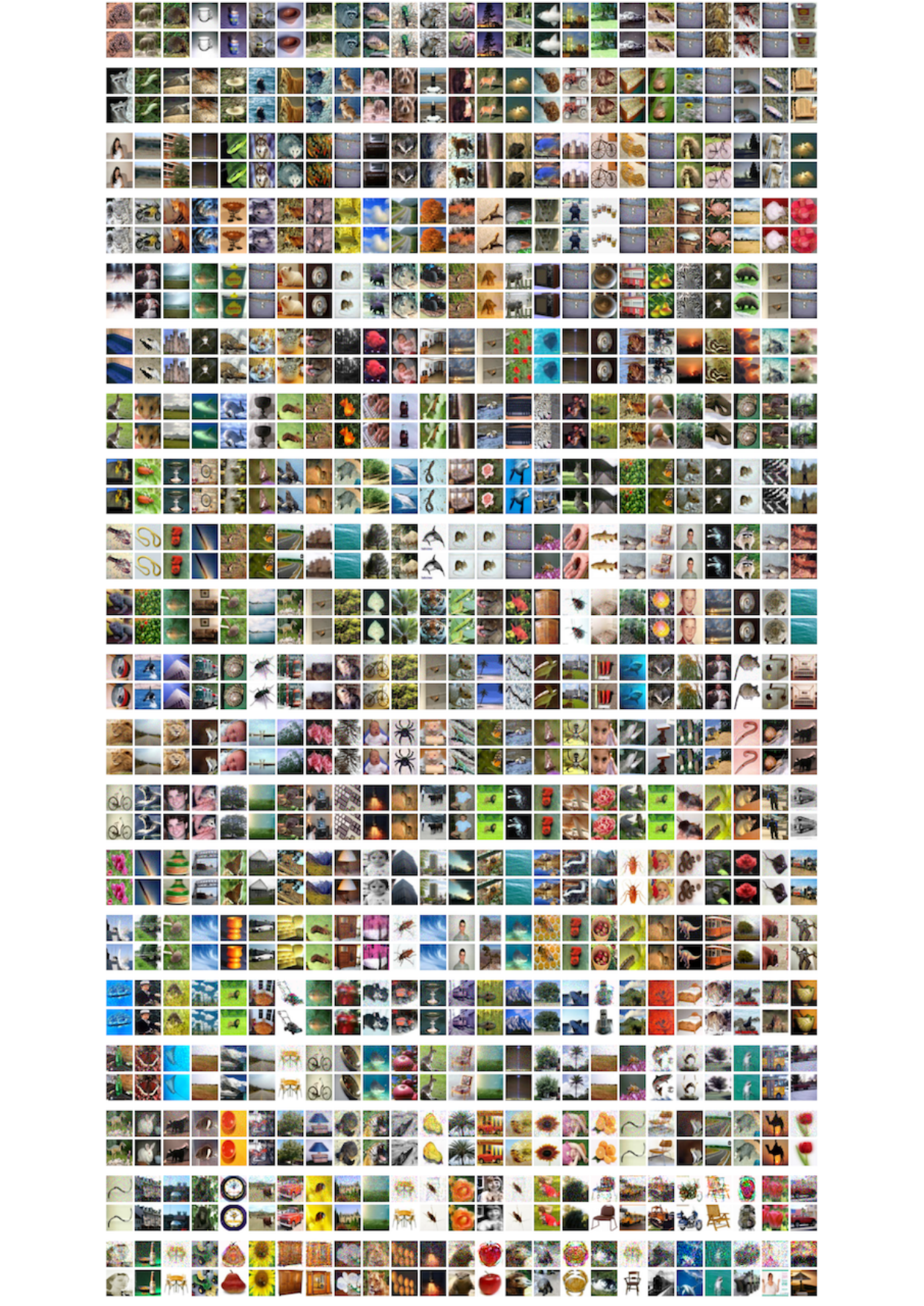}
    \caption{Reconstructions from a Laplace kernel trained using kernel regression on CIFAR100.}
    \label{fig:krr_laplace_cifar100}
\end{figure}

\subsection{SVM}
\begin{figure}[H]
    \centering
    \includegraphics[width=1\textwidth]{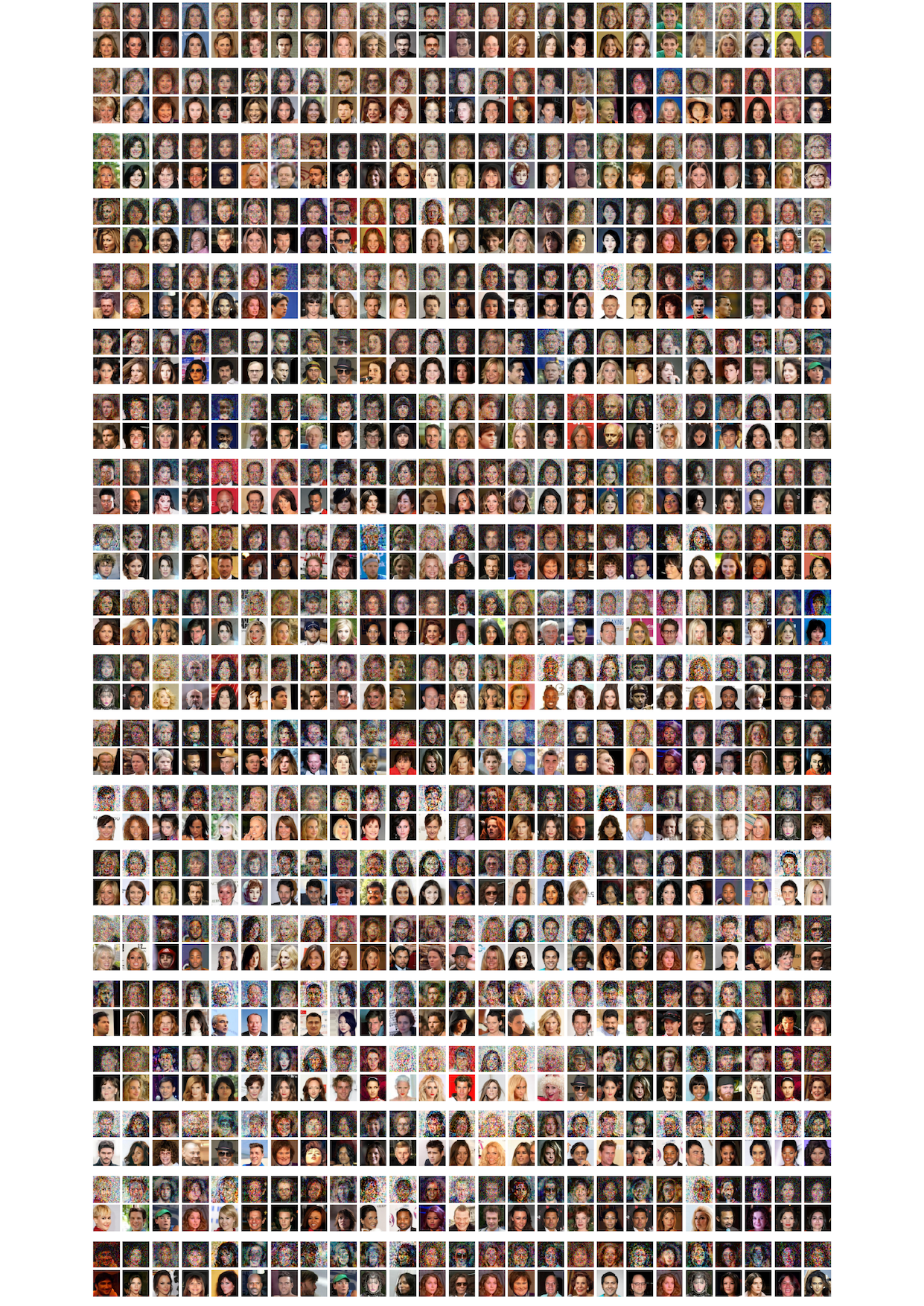}
    \caption{Reconstructions from an RBF kernel SVM on celebA.}
    \label{fig:svm_rbf_celebA}
\end{figure}

\begin{figure}[H]
    \centering
    \includegraphics[width=1\textwidth]{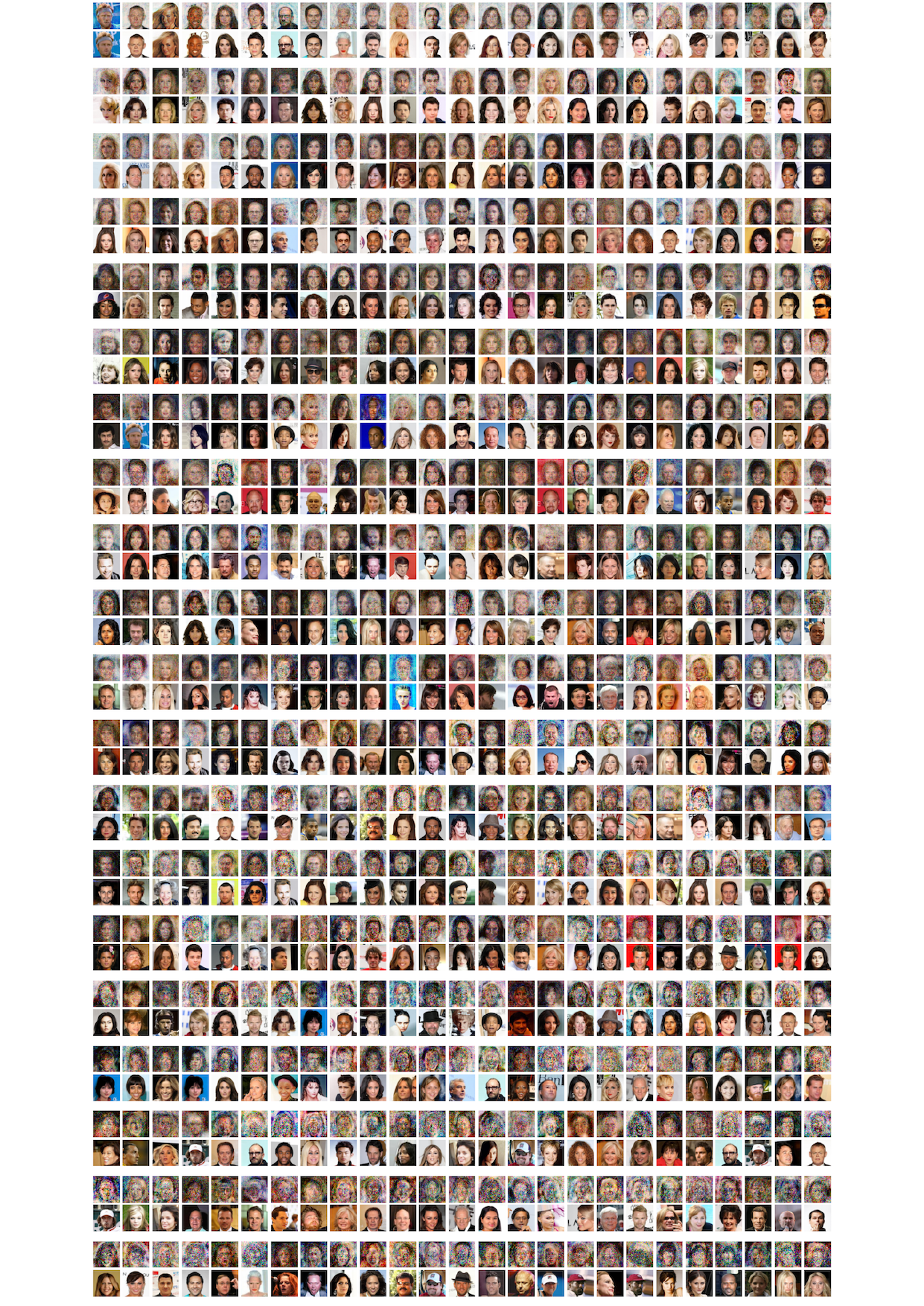}
    \caption{Reconstructions from a Laplace kernel SVM on celebA.}
    \label{fig:svm_laplace_celebA}
\end{figure}
\begin{figure}[H]
    \centering
    \includegraphics[width=1\textwidth]{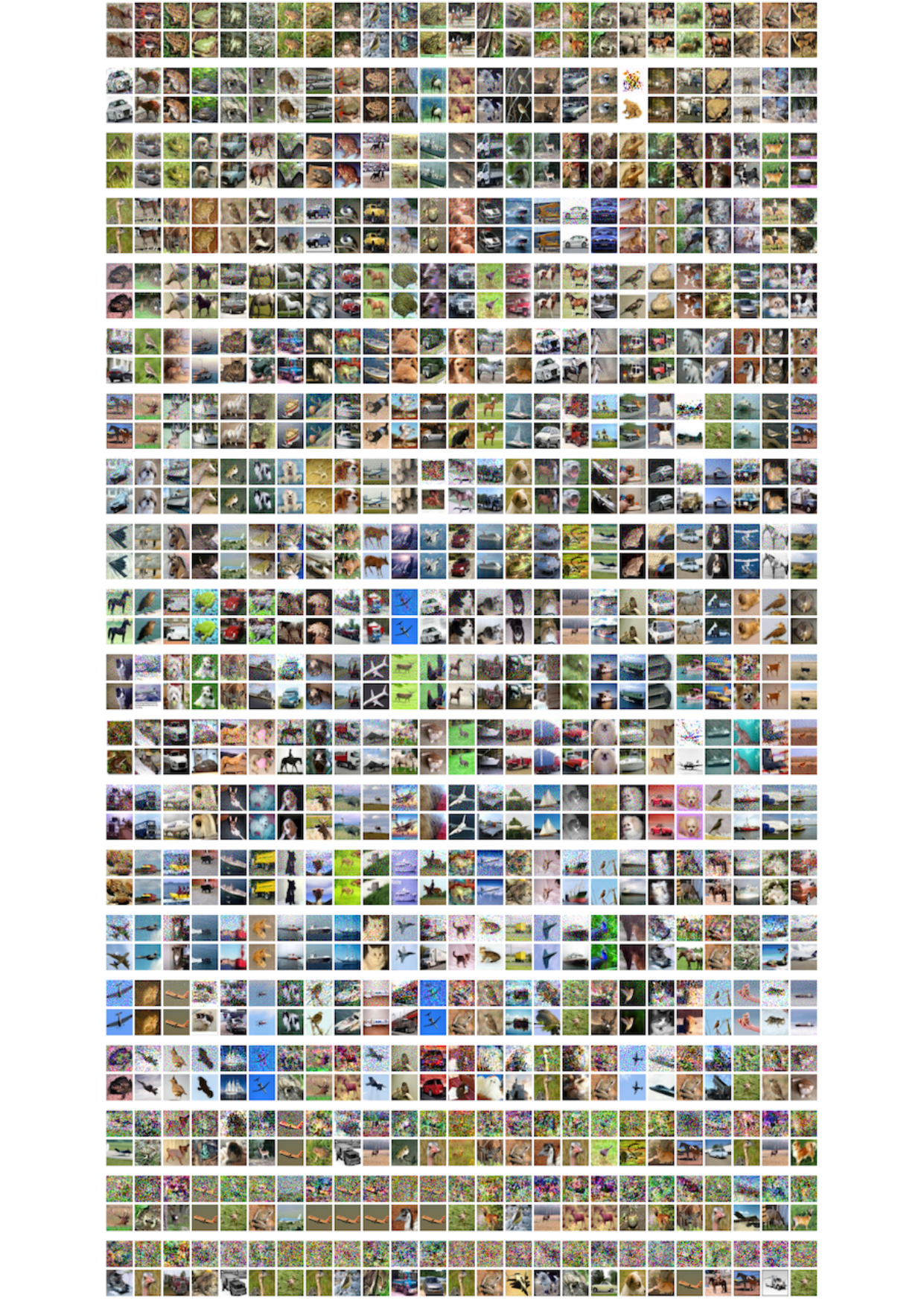}
    \caption{Reconstructions from an RBF kernel SVM on CIFAR10.}
    \label{fig:svm_rbf_cifar10}
\end{figure}

\begin{figure}[H]
    \centering
    \includegraphics[width=1\textwidth]{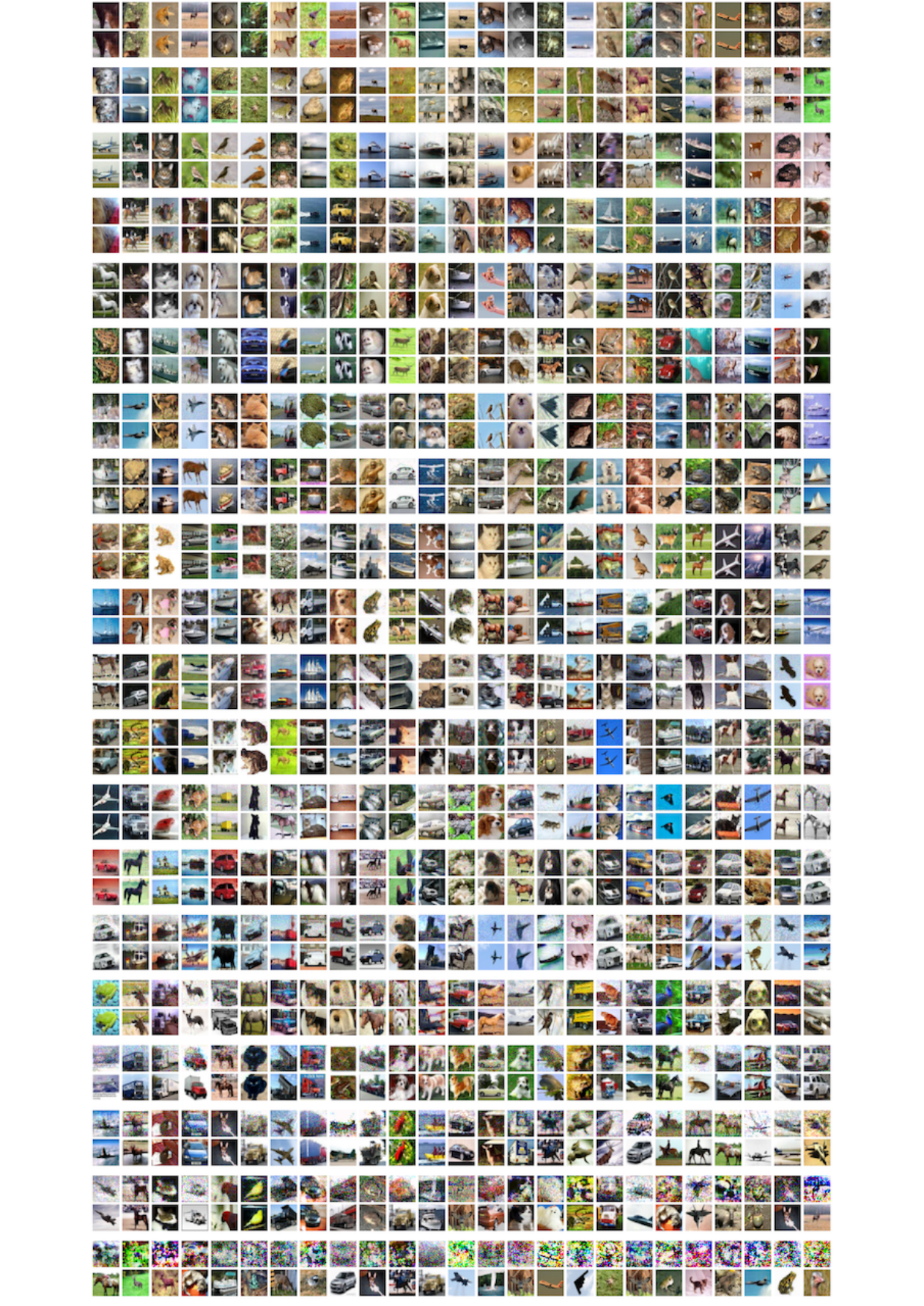}
    \caption{Reconstructions from a Laplace kernel SVM on CIFAR10.}
    \label{fig:svm_laplace_cifar10}
\end{figure}

\begin{figure}[H]
    \centering
    \includegraphics[width=1\textwidth]{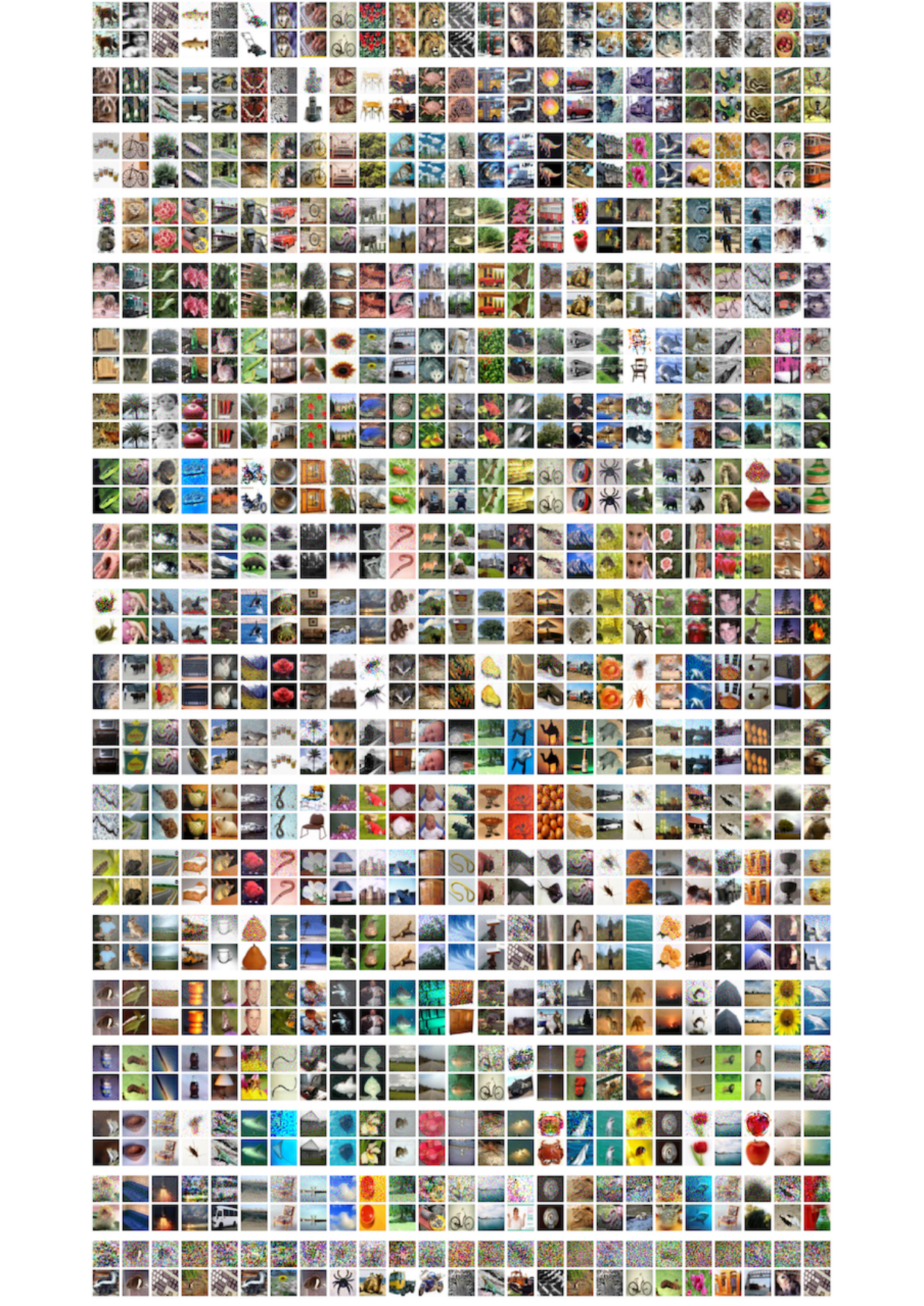}
    \caption{Reconstructions from an RBF kernel SVM on CIFAR100.}
    \label{fig:svm_rbf_cifar100}
\end{figure}

\begin{figure}[H]
    \centering
    \includegraphics[width=1\textwidth]{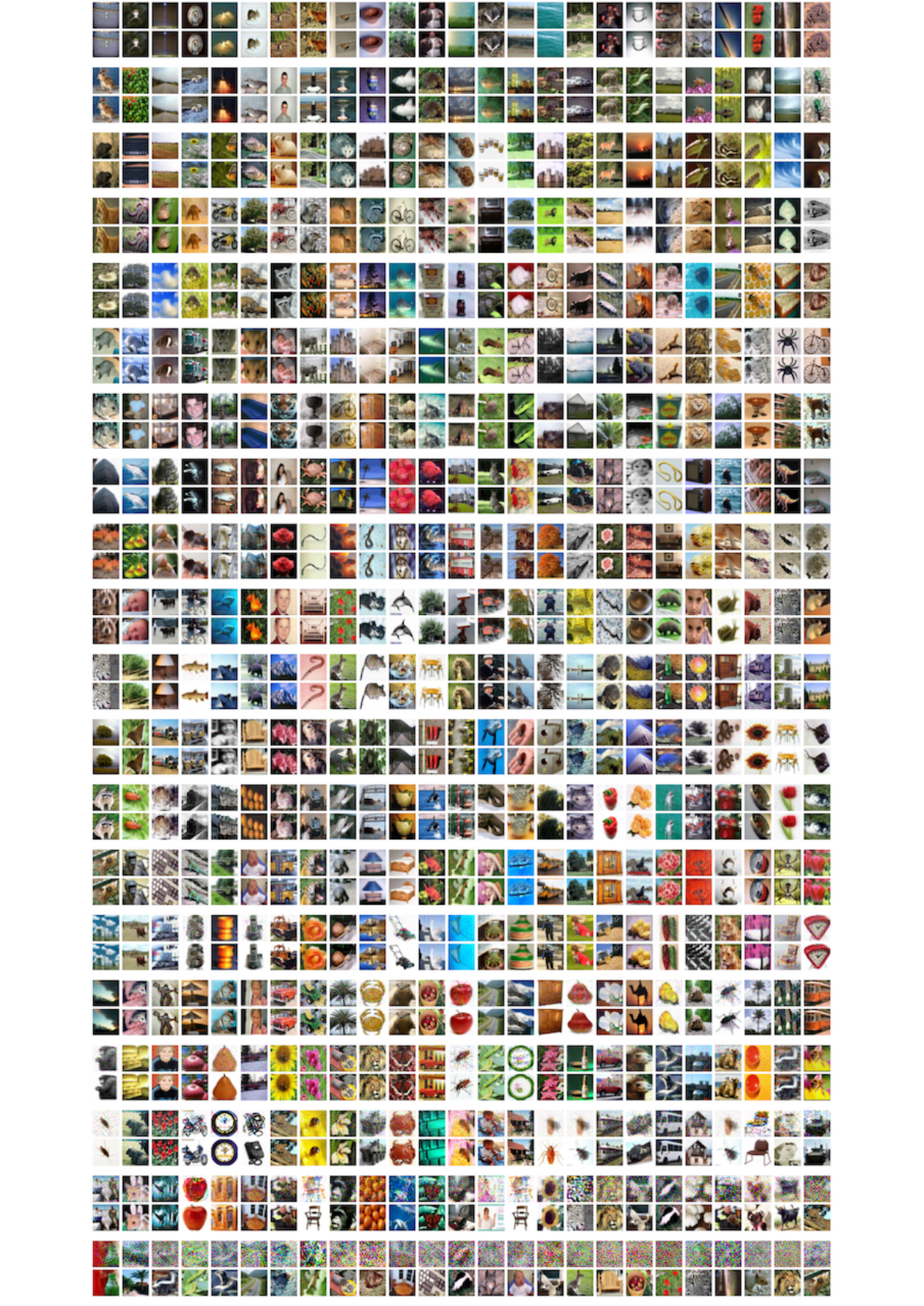}
    \caption{Reconstructions from a Laplace kernel SVM on CIFAR100.}
    \label{fig:svm_laplace_cifar100}
\end{figure}

\end{document}